\newcommand{\revision}[1]{{\color{black} #1}}
\theoremstyle{plain}
\newtheorem{theorem}{Theorem}[section]
\theoremstyle{definition}
\theoremstyle{remark}
\newtheorem{remark}[theorem]{Remark}
\icmltitlerunning{Speeding up Speculative Decoding via Sequential Approximate Verification}
\begin{document}

\twocolumn[
\icmltitle{Speeding up Speculative Decoding via Sequential Approximate Verification}



\icmlsetsymbol{equal}{*}

\begin{icmlauthorlist}
\icmlauthor{Meiyu Zhong}{equal,yyy}
\icmlauthor{Noel Teku}{equal,yyy}
\icmlauthor{Ravi Tandon}{yyy}

\end{icmlauthorlist}

\icmlaffiliation{yyy}{Department of Electrical and Computer Engineering, University of Arizona, Tucson, US}

\icmlcorrespondingauthor{Meiyu Zhong}{meiyuzhong@arizona.edu}
\icmlcorrespondingauthor{Noel Teku}{nteku1@arizona.edu}
\icmlcorrespondingauthor{Ravi Tandon}{tandonr@arizona.edu}

\icmlkeywords{Machine Learning, ICML}

\vskip 0.3in
]


\printAffiliationsAndNotice{\icmlEqualContribution}

\begin{abstract}

Speculative Decoding (SD) is a recently proposed technique for faster inference using Large Language Models (LLMs). SD operates by using a smaller draft LLM for autoregressively generating a sequence of tokens and a larger target LLM for parallel verification to ensure statistical consistency.  However, periodic parallel calls to the target LLM for verification prevent SD from achieving even lower latencies. We propose \textit{SPRINTER}, which utilizes a low-complexity verifier trained to predict if tokens generated from a draft LLM would be accepted by the target LLM. By performing \textit{sequential approximate  verification}, \textit{SPRINTER} does not require verification by the target LLM and is only invoked when a token is deemed unacceptable. This reduces the number of calls to the larger LLM, achieving further speedups and lower computation cost. We present a theoretical analysis of \textit{SPRINTER}, examining the statistical properties of the generated tokens, as well as the expected reduction in latency as a function of the verifier. We evaluate \textit{SPRINTER} on several datasets and model pairs, demonstrating that approximate verification can still maintain high quality generation while further reducing latency. 

\end{abstract}

\section{Introduction}
\label{sec:Intro}

\begin{figure}
    \centering
    \includegraphics[width=0.99\linewidth]{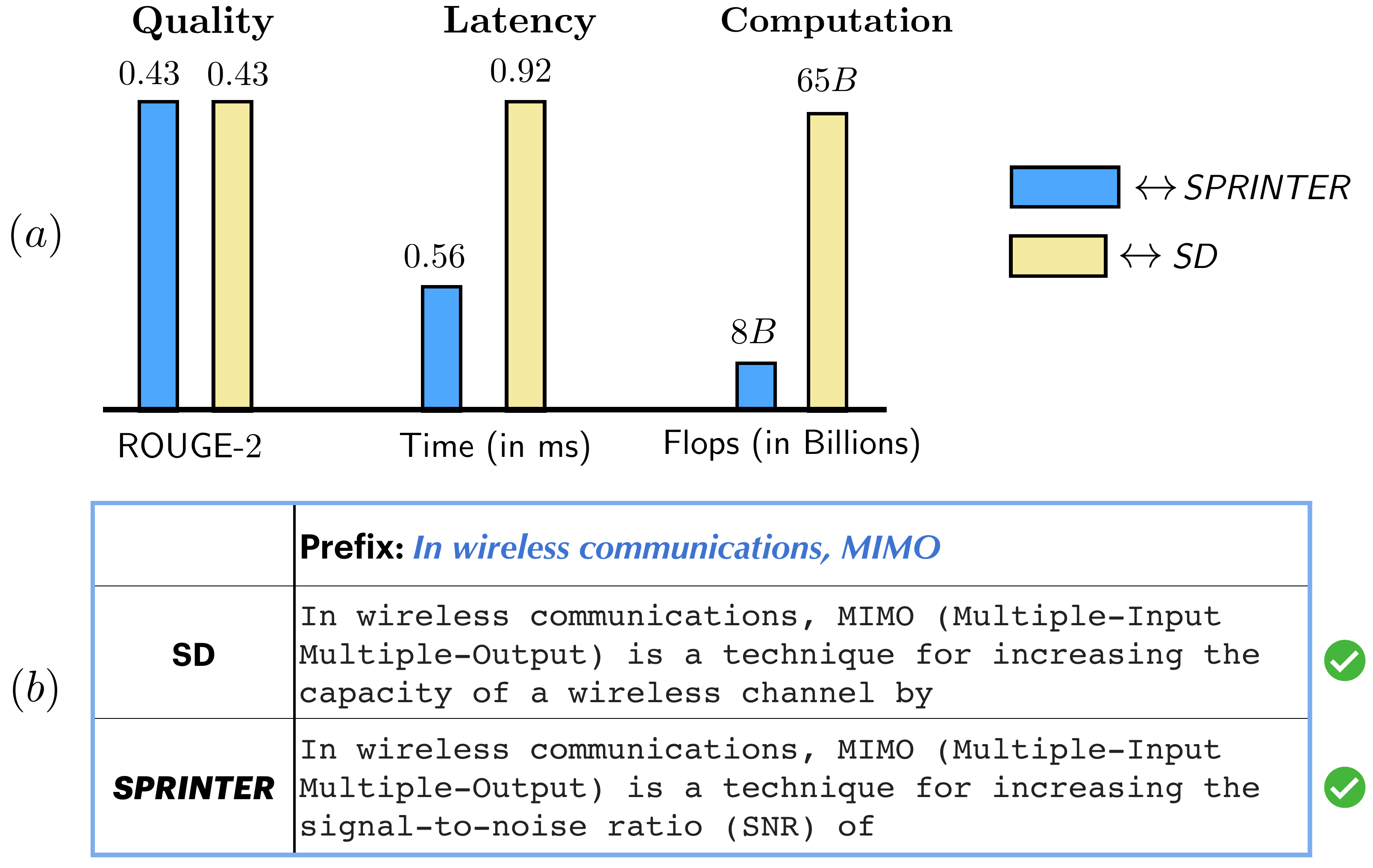}\vspace{-8pt}
    \caption{ \revision{(a) Comparison between \textit{SPRINTER} and SD with respect to \textit{Quality} (ROUGE score), \textit{Latency} (time in ms required to generate a token) and \textit{Computation} (number of flops required to generate $20$ consecutive acceptable tokens from the draft model). \textit{SPRINTER} can attain comparable quality, $\mathbf{1.64X}$ speedups, and $\mathbf{8X}$ smaller computation costs compared to SD. (b) Example responses generated via  \textit{SPRINTER} vs SD given the prefix: ``In wireless communications, MIMO". The response from \textit{SPRINTER} is comparable to SD (for more examples, see Section \ref{sec:prompts_ex}).}
    }
    \label{fig: quality_latency_comput}
    \vspace{-10pt}
\end{figure}
\begin{figure*}[t]
    \centering
    \includegraphics[width=0.9\linewidth]{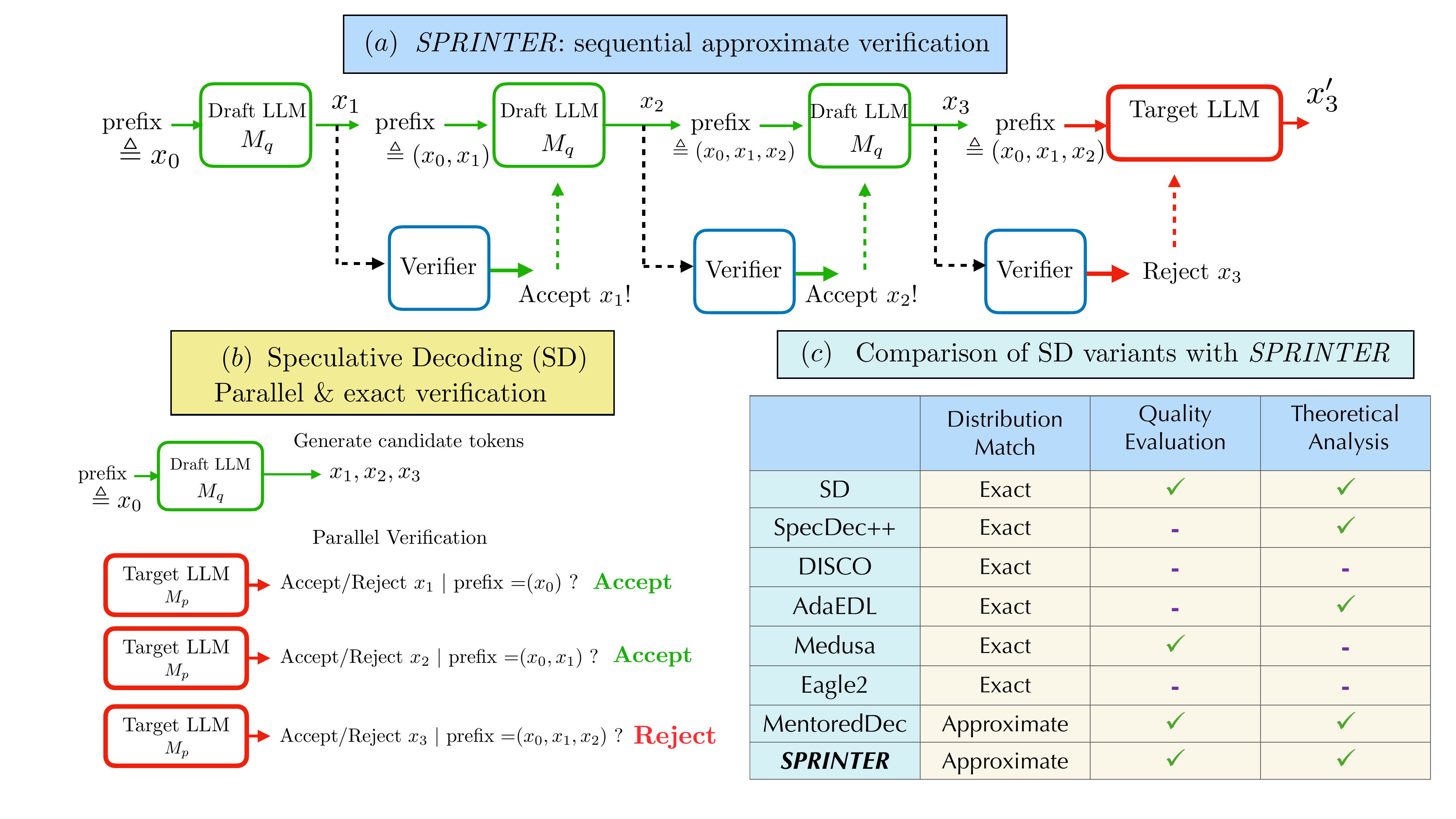}
    \caption{\revision{(a) \textit{SPRINTER} works by generating tokens from a smaller (draft) LLM, which are sequentially accepted/rejected by a \textit{verifier}, a low-complexity small classifier. In \textit{SPRINTER}, the larger (target) LLM is only called if a token is rejected and used only to replace the rejected token. (b) Speculative decoding (SD) works by generating multiple tokens by the draft model, all of them are verified in parallel by the target LLM. (c)
    Comparison of different speculative decoding based mechanisms with respect to three aspects: i) Approximate or Exact match with the larger LLM (target) probability distribution, ii) Quality Evaluation of Completions and iii) Theoretical Analysis. \textit{SPRINTER} is the first framework to provide in-depth analysis of sequential approximate verification; its impact on the quality-vs-latency tradeoff and provide insights on how to navigate this tradeoff. Additional discussion on related works is presented in Section \ref{sec:relatedworks_append}.}}
    \label{fig:frameworkSPrinter}
\end{figure*}

Large Langauge Models (LLMs) have shown to be very effective in different applications including text generation \cite{zingale2024language}, image analysis \cite{niu2024text}, and video understanding \cite{tang2024videounderstandinglargelanguage}. However, despite this success, LLM-based solutions for various problem domains are still constrained by \textit{high computational costs incurred during inference} due to large model sizes. To enable faster inference, Speculative Decoding (SD) \cite{leviathan2023fast} has been proposed as a solution for reducing the latencies incurred during the inference of significantly larger LLMs. Under this paradigm, a smaller draft LLM is used to autoregressively generate a certain number of tokens. These tokens are then passed to a larger target LLM, which processes the tokens in parallel to determine how many of them are acceptable (i.e. if the distribution of the generated tokens matches the distribution of what the target model would have generated). If they are not acceptable, then the target model is called to generate replacement tokens. By reducing the amount of times the target model is invoked for autoregressive generation, less latency is incurred. Thus, the objective of SD is to incur smaller inference times while guaranteeing that sampling tokens from the draft model is equivalent to sampling from the target model. \cite{mamou2024dynamicspeculationlookaheadaccelerates} proposed using a two layer feedforward network to stop the draft model from generating tokens and initiate the target model's verification process, once the output of the network is less than a threshold. 
\cite{huang2024specdec++} models the SD procedure as a Markov Decision Process and uses the draft model with a smaller, multi-layer network to predict the probability that the current token generated by the draft model should be accepted. The probability that there is at least one draft token that should be rejected is subsequently derived and compared with a threshold to determine if the target model should be invoked for verification.

\textbf{Overview of SPRINTER:} Running the target model for parallel verification, even periodically (i.e. after a certain number of tokens) can still result in significant latencies. Higher speedups can be attained if the constraint that the tokens generated by the draft model must match the distribution of the target model is relaxed. However, we do not want to deviate too far from the target distribution as this would increase the likelihood of the draft model generating inaccurate tokens. To balance this tradeoff, we propose \textit{SPRINTER}, a sampling technique that uses a low-complexity verifier that predicts when it is necessary to invoke the target model to generate a token that replaces the current draft token. Subsequently, under \textit{SPRINTER}, verification is performed \textit{sequentially} as draft tokens are generated, in contrast to SD based approaches which perform parallel verification of multiple draft tokens through the larger LLM, which requires more computational resources to execute.  

In many computational settings, including machine learning and optimization, the cost of generating a valid solution can be significantly higher than that of verifying one. This asymmetry, often referred to as the \textit{generation-verification gap}, also provides inspiration for \textit{SPRINTER}: verifying the acceptability of a token could be  easier than generating a high quality token from a larger LLM. The verifier used in this work has a significantly lower complexity compared to the draft and target LLMs; subsequently, the additional overhead introduced by training and using it is negligible (as further evidenced in the results). b) Our approach is also aligned with observations made in recent works such as \cite{judge,melcer2024approximately}, which have shown that \textit{high-quality} tokens can still be generated without necessarily matching the distribution of the larger target model. \revision{Figure \ref{fig: quality_latency_comput}(a) shows that \textit{SPRINTER} achieves lower latency and computational cost while maintaining quality comparable to that of SD. Figure \ref{fig: quality_latency_comput}(b) presents example responses generated by \textit{SPRINTER} and SD, illustrating that \textit{SPRINTER} produces responses of similar quality as SD.} We next summarize the main contributions of this paper.
\begin{itemize}
\vspace{-5pt}
    \item \textbf{\textit{SPRINTER} framework}. We propose \textit{SPRINTER}, a framework for achieving faster inference from LLMs using a pair of (draft (small), target (large)) LLMs together with the aid of a verifier. The role of the verifier is to perform approximate verification, i.e., if tokens generated by a draft model would be acceptable by the larger target LLM. The key motivating factors and the detailed framework is described in Section \ref{sec:SPRINTER}. 
\vspace{-5pt}
    \item \textbf{Theoretical Analysis}. We present a comprehensive theoretical analysis in Section \ref{sec:theory} to show the tradeoffs between quality of generated tokens versus latency speedups and computational savings offered by \textit{SPRINTER}. Specifically, we demonstrate how the ROC curve characteristics (e.g., false-positive and true-positive rates) of the verifier can be used to balance the tradeoff between latency and quality. Furthermore, we discuss strategies to train the verifier and show how the theoretical results also provide useful design insights. 
\vspace{-1pt}
    \item \textbf{Experiments and Validation}.
    \revision{We present a comprehensive evaluation of \textit{SPRINTER} on several datasets and model pairs in Section \ref{sec:results}, demonstrating its ability to reduce latency while requiring significantly less computation and maintaining high quality. Specifically, Win-tie rate and ROUGE metrics are used to evaluate the quality of responses generated by \textit{SPRINTER} against those generated with SD, indicating that only minimal quality degradation is experienced. \textit{SPRINTER} is also shown to outperform target distribution-preserving SD variants (e.g. Eagle2 \cite{li2406eagle}, Medusa \cite{cai2024medusa}) in speed and quality. Furthermore, higher performance improvements are attained using \textit{SPRINTER} compared to Mentored Decoding \cite{tranthien2024mentored}, which similarly relaxes the requirement that generated tokens match the target distribution.}
\end{itemize}



\vspace{-10pt}
\section{Preliminaries on Speculative Decoding (SD)}
\label{sec:prelims}
SD was originally proposed in \cite{leviathan2023fast} as a novel algorithm for speeding up LLM inference from a larger target LLM $M_p$ 
through the help of a smaller (faster) draft LLM $M_q$. $p(x)$ and $q(x)$ represent the probability distributions we get from $M_p$ and $M_q$ respectively for the next token given a specific prefix (i.e. set of already generated tokens $x_{<t}$); specifically, we let $p(x)$ and $q(x)$ denote $p(x|\text{prefix})$ and $q(x|\text{prefix})$ respectively. First, given a prefix, the draft model $M_q$ autoregressively generates $\gamma$ tokens, where $\gamma$ is a hyperparameter chosen by the user. These sequence of tokens are then passed to the target model which performs verification of all $\gamma$ candidate completion sequences in parallel. For the last token $x$ in each sequence, the target model $M_p$ is invoked and it verifies the relationship between $q(x)$ and $p(x)$ (see Figure \ref{fig:frameworkSPrinter} for an example). If $q(x) < p(x)$, meaning that the probability of the draft token is within the distribution of the target token, it is acceptable. However, if a draft token in any one of the candidate sequences is not accepted, it can still be accepted with probability $\frac{p(x)}{q(x)}$; otherwise, the target model resamples from an alternative distribution given as follows \cite{leviathan2023fast}: 
\begin{align}
  p'(x) = \text{norm}(\max(0, p(x) - q(x))).
    \label{eq:resample}
\end{align}
SD ensures that the entire sampling process matches the distribution of the target LLM. Speculative decoding has turned out to be powerful in achieving speedups and has led to a large number of recent papers that have proposed variations of SD, including works shown in Figure \ref{fig:frameworkSPrinter}(c). \cite{mamou2024dynamicspeculationlookaheadaccelerates} and \cite{huang2024specdec++}, for example, use a low-complexity classifier to determine when the draft model should stop generating tokens while ensuring that the sampled tokens match the distribution of the target model. \cite{kim2024speculative} and \cite{agrawal2024adaedl} propose heuristics that incur less complexity compared to using a verifier; however, they are also ensure that the sampled tokens match the distribution of the target model. We provide a more detailed discussion on these related works in Section \ref{sec:relatedworks_append}.

\begin{figure}
    \centering
    \includegraphics[width=0.7\linewidth]{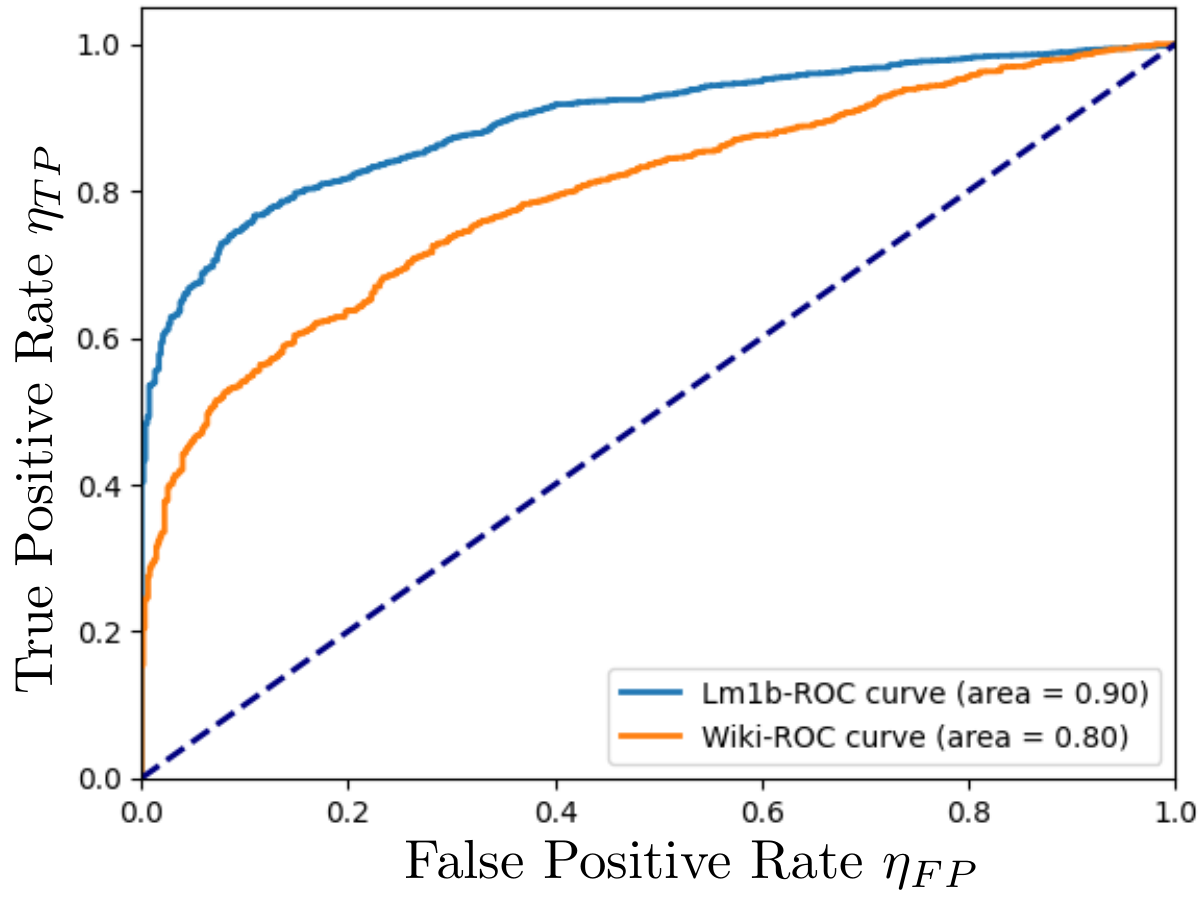}\vspace{-5pt}
    \caption{ROC Curve Performance of a trained Verifier (less than $1k$ parameters) on the Wiki-Summary and LM1B Datasets with GPT-Neo-125M as the draft model $M_q$ and  GPT-Neo-1.3B as the target model $M_p$. Despite being orders of magnitude smaller in size compared to the draft and target models, the verifier was able to achieve AU-ROC of $0.8$ (respectively $0.9$) on the two datasets.}
    \label{fig:ROC_wiki_LM1B_12}
    \vspace{-10pt}
\end{figure}



\section{SPRINTER Framework \& Analysis} 
\label{sec:SPRINTER}
Before presenting our framework, we first discuss some of the key motivating factors behind \textit{SPRINTER}.

\textit{\textbf{Cost of Parallelism}}. As SD and several variants discussed in related work attempt to match the target distribution, they end up invoking the target model which performs parallel verification. While parallelism ensures that the latency (time) for verifying $\gamma$ tokens is equivalent to running the target model once, one still has to pay the cost of parallelism as the target model runs $\gamma$ times. As $\gamma$ increases, the latency reduces but the cost of parallelism grows proportionally. This is the first idea that motivates us to study sequential verification; instead of verifying by the target LLM in parallel, we instead propose \textit{approximate verification} by a significantly smaller model (named the verifier) in a sequential manner as shown in Fig. \ref{fig:frameworkSPrinter}(a). Depending on the quality of the verifier (i.e., false-positive and true-positive rates), we only call the target LLM if a token is rejected by verifier. Thus, \textit{SPRINTER} can achieve the dual benefit of reducing the number of calls to the target LLM and completely eliminates the cost of running it in parallel. 

As an illustration, Fig. \ref{fig:ROC_wiki_LM1B_12} shows the ROC curve of a low-complexity verifier (with a single layer and less than $1k$ training parameters) which was trained to accept/reject tokens generated by GPT-Neo-125M (draft model $M_q$) and  GPT-Neo-1.3B as the target model $M_p$. The fact that we were able to achieve AU-ROC (area under ROC curve) of $0.8$ and $0.9$ was achievable on LM1B and Wiki-summaries datasets first highlights the feasibility of low-complexity verification (more results are presented in Section \ref{sec:results}).

\textit{\textbf{Quality of Smaller Models}}. \revision{Inevitably, if we resort to approximate verification, we have to give up statistical consistency, i.e., one cannot guarantee a match with the target LLM distribution. However, statistical consistency alone may not be a necessary indicator for high quality generation. For instance, recent works such as Mentored Decoding \cite{tranthien2024mentored} and Judge Decoding \cite{judge} have shown that even smaller LLMs have generation capabilities that can be comparable with larger ones. 
Our idea behind \textit{SPRINTER} is to use a smaller trained model (verifier) which is pipelined with the smaller model for sequential verification. It is this interplay between latency, total computational costs and quality that motivate \textit{SPRINTER}. We next describe the framework in detail followed by the theoretical analysis of \textit{SPRINTER}. }


\textit{\textbf{Algorithm}}. We now provide an overview of the \textit{SPRINTER} sampling process. First, a prefix is fed to the draft model and a token is sampled with probability $q(x)$. The draft token is then passed to a verifier $V$ to predict whether or not the ratio between $q(x)$ and $p(x)$ is greater or less than 1. The verifier can take as input various latent features derived from the draft model (e.g. embedding of the draft token $x$, probability distribution of the LLM's vocabulary). As there is flexibility with which features from the draft model can be extracted, we denote the input to the verifier as $s(x,\text{prefix})$. An ideal verifier would make the following decision: 
\[
{V(s(x,\text{prefix}))} =
\begin{cases}
1   & \frac{q(x)}{p(x)} \leq 1  \\

0& \frac{q(x)}{p(x)} > 1. \\
\end{cases}
\]

\begin{figure*}[t]
    \centering
    \includegraphics[width=0.88\linewidth]{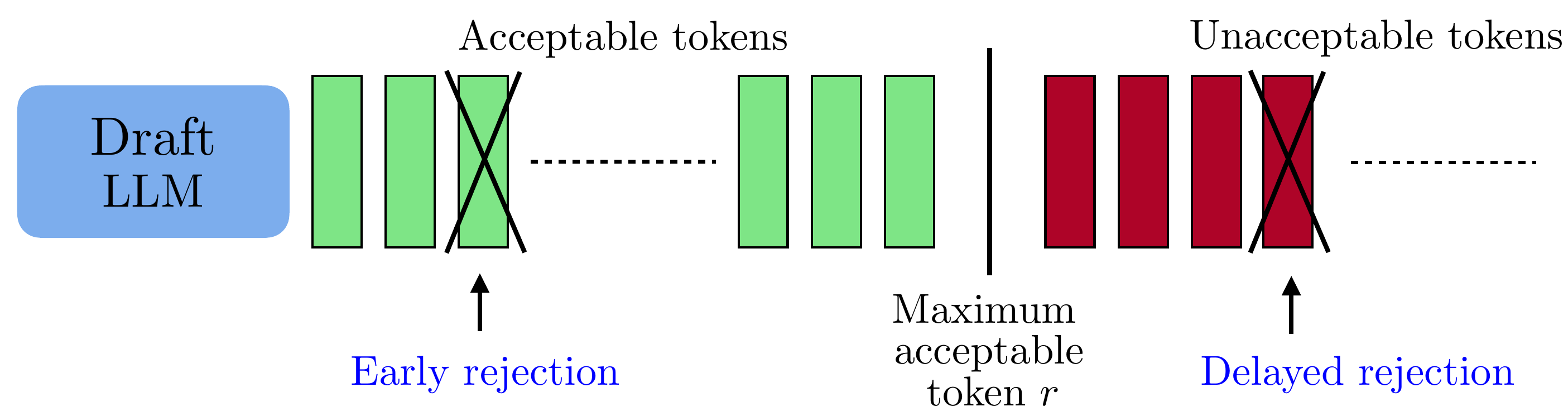} \vspace{-4pt}
    \caption{A draft LLM generates $r$ tokens that are acceptable and subsequent tokens that are unacceptable. If the verifier rejects one of the first $r$ consecutive tokens, the speedup attained from continuing to accept tokens until the $r^{th}$ token is lost (i.e. early rejection). If the verifier continues to accept tokens after the $r^{th}$ token, it experiences smaller latencies but at the cost of accepting low-quality tokens (i.e. delayed rejection). Theorem \ref{the:SPRINTER_expected_tokens} characterizes the expected number of generated tokens as a function of \(\eta_\text{TP}\) and \(\eta_\text{FP}\).}
    \label{fig:token_generation}
\end{figure*}
 If $V(s(x,\text{prefix})) = 1$, the verifier predicts that the ratio is less than 1, suggesting that the current token $x$ is acceptable and that $M_q$ should generate the next token. If $V(s(x,\text{prefix}))=0$, the verifier predicts that the ratio is larger than 1, leading to the rejection of the current token and indicating that $M_p$ should be called. In this scenario, similar to SD, the draft token can be accepted with probability $\frac{p(x)}{q(x)}$ or rejected with probability $1-\frac{p(x)}{q(x)}$ and replaced with a token sampled from the revised distribution \eqref{eq:resample}. In Section \ref{sec:learn_veri}, we provide details on how to train a verifier.  We illustrate the sampling process of \textit{SPRINTER} in Algorithm \ref{alg:alg_short} (full algorithm is presented in Section \ref{sec:algorithm_hyper}) and show the \textit{SPRINTER} sampling process in Fig \ref{fig:frameworkSPrinter}(a).

\begin{algorithm}[h]
   \caption{\textit{SPRINTER}}
   \label{alg:alg_short}
\begin{algorithmic}
   \STATE {\bfseries Input:} $M_p, M_q, V, \text{Prefix}, \text{Prediction Threshold}~\tau$
   \STATE Initiate the values
   \WHILE{True}
   \STATE Update the Prefix
    \STATE Generate the current token $x$ from $M_q$
    \STATE Obtain the Verifier's prediction of the current token $V (s(x, \text{Prefix}))$ . 
      
    \IF{$V (s(x, \text{Prefix})) \leq \tau$}
    \STATE Break
    \ENDIF
    \ENDWHILE
   \STATE Invoke $M_p$ to verify the \textcolor{blue}{last token} and re-sample if necessary.
 
\end{algorithmic}
\end{algorithm}
\subsection{Theoretical Analysis of SPRINTER}\label{sec:theory}
In this Section, we present our theoretical results on \textit{SPRINTER}. Through these results, we aim to study the impact of verifier's performance on a) the probability distribution of tokens sampled by \textit{SPRINTER}; b) expected number of consecutive tokens sampled by \textit{SPRINTER} before invoking the target model; c) average latency incurred in the process as well as the amount of computational savings. \revision{All theoretical results in our paper are derived under the assumption that the verifier operates in an i.i.d. manner. }
 As we discuss later in this Section, these results also provide practical design insights for navigating the quality-vs-latency tradeoffs.  

\textbf{(a) Statistical Analysis of generated tokens.} Let us denote $\eta_{FP}$ and $\eta_{TP}$ as the false-positive and true-positive rates of the verifier, respectively. Specifically, a false positive refers to the setting if an unacceptable token (i.e., $q(x)/p(x)>1$) is deemed acceptable by the verifier. Conversely, a true positive refers to the scenario if an acceptable token (i.e., $q(x)/p(x)\leq 1$) is accepted by the verifier. 
In our first result, we characterize the distribution of the tokens generated by \textit{SPRINTER}. Proof of Theorem \ref{the:SPRINTER} can be found in the Appendix \ref{sec:31proof}.
\begin{theorem}\label{the:SPRINTER}
    The probability of a token $x$ being chosen when running SPRINTER is given as
    \begin{align}
    \label{eq:SPRINT_DIST}
        p_\text{SPRINTER}(x) = (1-\eta_\text{FP})p(x) + \eta_\text{FP} q(x),
    \end{align}
    where $\eta_\text{FP}$ is the false positive rate of the verifier. Furthermore, the total-variation distance between the target and \textit{SPRINTER} distributions is $d_\text{TV}(p,p_\text{SPRINTER}) = \eta_\text{FP}d_\text{TV}(p,q)$.
\end{theorem}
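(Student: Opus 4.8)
The plan is to condition on the two possible outcomes of the verifier at each step and track how a token $x$ ends up being emitted by \textit{SPRINTER}. There are exactly two disjoint ways the final emitted token can be $x$: either $x$ was drawn from the draft model $M_q$ and the verifier \emph{accepted} it (so the draft generation continues and, in the relevant last-token slot, $x$ is kept), or $x$ was drawn from $M_q$, the verifier \emph{rejected} it, and then the standard speculative-decoding correction step on that single token produced $x$ (either by keeping the draft token with probability $p(x)/q(x)$ or by resampling from $p'$ in \eqref{eq:resample}). The key observation is that the event ``verifier accepts a token drawn as $x$'' partitions according to whether $x$ is an acceptable token ($q(x)\le p(x)$) or not. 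For an acceptable $x$, the verifier accepts with probability $\eta_\text{TP}$; for an unacceptable $x$, it accepts with probability $\eta_\text{FP}$.

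First I would write $p_\text{SPRINTER}(x) = \Pr[\text{draw }x\text{, accept}] + \Pr[\text{draw }x\text{, reject, correction outputs }x]$, and split into the cases $q(x)\le p(x)$ and $q(x)>p(x)$. In the acceptable case: the ``accept'' branch contributes $q(x)\,\eta_\text{TP}$; the ``reject'' branch contributes $q(x)(1-\eta_\text{TP})\cdot\frac{p(x)}{q(x)} = (1-\eta_\text{TP})p(x)$ from keeping the draft token, plus a resampling contribution from rejections of \emph{other} tokens $y$ with $q(y)>p(y)$, which routes mass to $x$ via $p'(x)=\mathrm{norm}(\max(0,p(x)-q(x)))$; since $q(x)\le p(x)$ this resampling term must be shown to supply exactly the residual $p(x)-q(x)$ times the appropriate factor so that everything telescopes to $(1-\eta_\text{FP})p(x)+\eta_\text{FP}q(x)$. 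In the unacceptable case ($q(x)>p(x)$): the ``accept'' branch gives $q(x)\,\eta_\text{FP}$, the ``keep on reject'' branch gives $q(x)(1-\eta_\text{FP})\frac{p(x)}{q(x)}=(1-\eta_\text{FP})p(x)$, and $p'(x)=0$ so there is no resampling contribution — summing gives $\eta_\text{FP}q(x)+(1-\eta_\text{FP})p(x)$, matching \eqref{eq:SPRINT_DIST}. The cleanest route is actually to mirror the original speculative-decoding argument of \cite{leviathan2023fast}: there, $\Pr[\text{emit }x] = q(x)\min(1,p(x)/q(x)) + (\text{resample mass})\cdot\beta_x = p(x)$; here the only change is that a token is sent to the correction step with probability $(1-\eta_\text{TP})$ (if acceptable) or $(1-\eta_\text{FP})$ (if unacceptable) rather than always, and an acceptable token drawn as $x$ is additionally emitted outright with probability $\eta_\text{TP}$. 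Carrying the SD identity through with these modified weights yields \eqref{eq:SPRINT_DIST} directly; I would present it this way to avoid re-deriving the normalization constant of $p'$.

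For the total-variation claim, once \eqref{eq:SPRINT_DIST} is in hand it is immediate: $p(x)-p_\text{SPRINTER}(x) = \eta_\text{FP}(p(x)-q(x))$, so $d_\text{TV}(p,p_\text{SPRINTER}) = \tfrac12\sum_x |p(x)-p_\text{SPRINTER}(x)| = \tfrac12\,\eta_\text{FP}\sum_x|p(x)-q(x)| = \eta_\text{FP}\,d_\text{TV}(p,q)$, using $\eta_\text{FP}\ge 0$ to pull the constant out of the absolute value. This part is a one-line computation and carries no difficulty.

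The main obstacle is the bookkeeping in the acceptable-token case: making sure that the resampled mass from the (many) rejected unacceptable tokens lands on the acceptable tokens with exactly the right total weight so that the $q(x)$ and $p(x)$ terms recombine into $(1-\eta_\text{FP})p(x)+\eta_\text{FP}q(x)$. This requires being careful that the normalization $\sum_y \max(0,p(y)-q(y)) = \sum_y \max(0,q(y)-p(y)) = d_\text{TV}(p,q)$ appears and cancels correctly, and that the ``last token'' semantics of the algorithm (only the final token before a rejection is the one whose distribution we are computing, with $M_p$ invoked on it) is handled consistently with how acceptances propagate. Framing the whole computation as a perturbation of the known SD identity, rather than from scratch, is what sidesteps this obstacle.
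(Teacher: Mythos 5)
Your proposal follows essentially the same route as the paper's proof: split tokens into acceptable ($q(x)\le p(x)$) and unacceptable ($q(x)>p(x)$) classes, sum the verifier-accept branch, the correction-step keep branch, and the resampled mass (using $\sum_y \max(0,q(y)-p(y))=\sum_y\max(0,p(y)-q(y))$ to cancel the normalizer of $p'$), then get the TV identity in one line. One local slip: in your explicit acceptable-case accounting you write the keep probability in the correction step as $\tfrac{p(x)}{q(x)}$, but since $p(x)\ge q(x)$ this should be $\min(1,p(x)/q(x))=1$, giving $q(x)(1-\eta_\text{TP})$ rather than $(1-\eta_\text{TP})p(x)$; your preferred ``mirror the SD identity'' framing applies the correct $\min(1,p/q)$ weight and recovers exactly the paper's computation, so the approach goes through.
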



\begin{remark}
The false positive rate $\eta_\text{FP}$ directly influences the distribution of \textit{SPRINTER} and represents how much the distribution of \textit{SPRINTER} deviates from the distribution of the target model. This observation aligns with the intuition that a perfect verifier with $\eta_\text{FP} =0$ (i.e. \textit{SPRINTER} never samples a token that should be rejected), for example, this results in $p_\text{SPRINTER} = p(x)$, effectively matching the distribution of the target model. 
\end{remark}

\textbf{(b) Expected number of generated tokens}. We now analyze the expected number of tokens accepted by the verifier when using \textit{SPRINTER} as a function of $\eta_\text{FP}$ and $\eta_\text{TP}$. For this analysis, we consider the scenario illustrated in Figure \ref{fig:token_generation}. Suppose that the ground truth is that given a prefix, the draft model $M_q$ is capable of producing $r$ acceptable tokens sequentially (in other words, the first $r$ generated tokens by $M_q$ are acceptable, whereas subsequent ones are unacceptable). Under this ground truth, let us define the random variable $N_{\text{SPRINTER}}$ as the number of consecutive tokens accepted by the verifier. Assuming that the verifier makes decisions in an i.i.d. manner, it can exhibit two types of behavior also shown in Fig. \ref{fig:token_generation}:
\begin{itemize}
    \item \textbf{Early Rejection}: This occurs if the verifier accepts the first $(i-1)$ tokens but mistakenly predicts that the $i^{th}$ token (for $i \leq r$) should be rejected, then the verifier will revert to calling the target model rather than continuing to enable the draft model to generate the remaining $r-i$ acceptable tokens. In doing so, the verifier misses out on experiencing an even greater latency reduction while still generating high-quality tokens. This indicates that $\eta_\text{TP}$ directly influences the early rejection caused by \textit{SPRINTER}.
    \item \textbf{Delayed Rejection}: On the other hand, it can happen that the verifier continues to accept more than $r$ tokens. Specifically, if the verifier first stops at the $i^{th}$ token (for $i > r$), then it does not invoke the target model until token $i$, resulting in a higher computational savings and latency speedups but at the cost of accepting $(i-r)$ lower quality tokens. This indicates that $\eta_\text{FP}$ directly influences the deviation of the statistical distribution from the target model.
\end{itemize}
Our next Theorem characterizes the properties of $N_\text{SPRINTER}$ assuming that $r$ consecutive draft tokens are acceptable. 
\begin{figure}[t]
    \centering
    \includegraphics[width=0.9\linewidth]{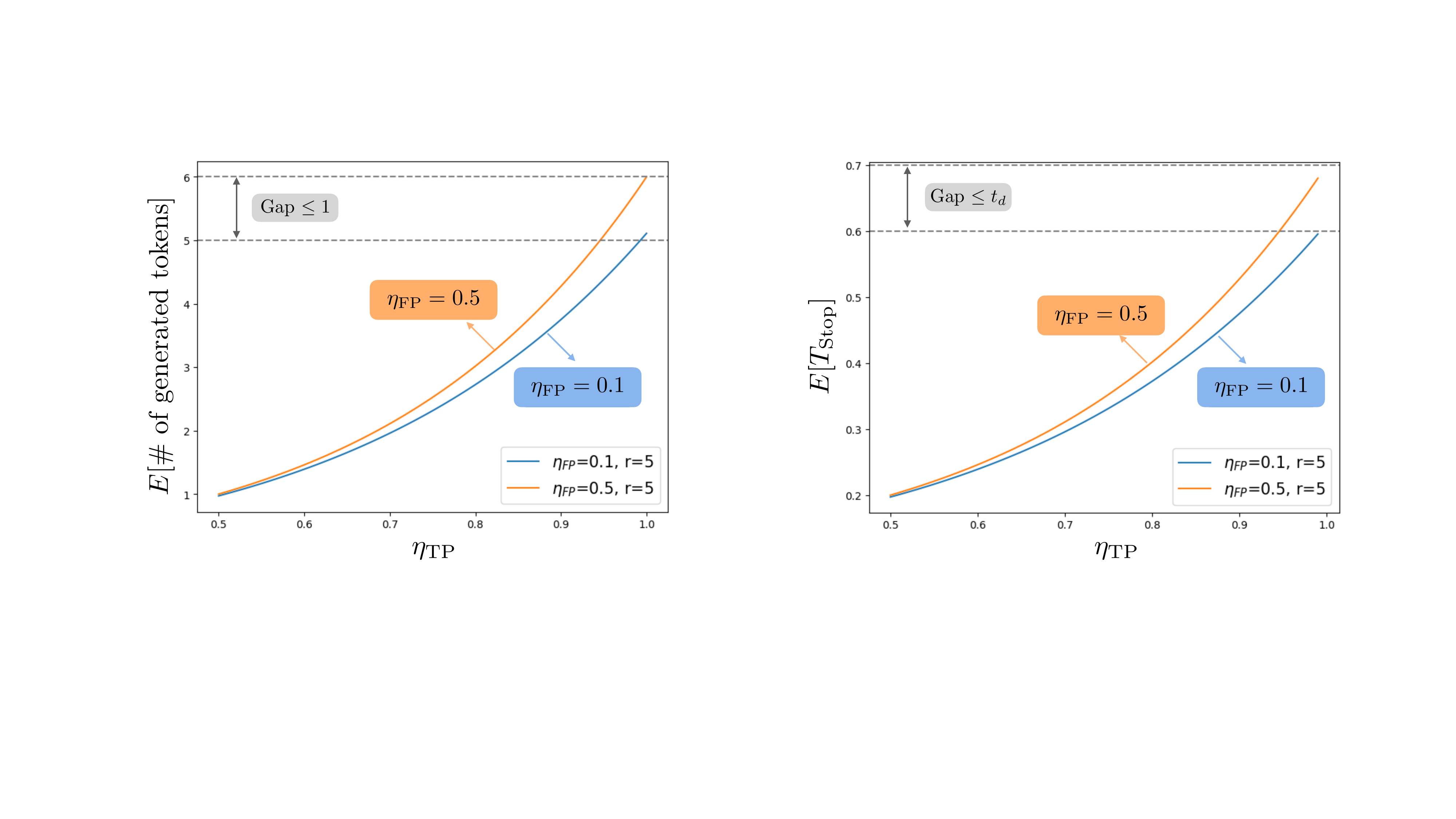}\vspace{-10pt}
    \caption{Illustration of expected number of tokens generated by \textit{SPRINTER} as a function of true-positive rate ($\eta_{TP}$) for two different values of $\eta_{FP}$ when the number of consecutively acceptable tokens is $r=5$. We can observe that as long as $\eta_{FP}\leq 0.5$, the average number of unacceptable tokens  (shown as the \textit{``Gap"}) generated by \textit{SPRINTER} never exceeds $1$.}
    \label{fig:token_tp_tradoff}
    \vspace{-15pt}
\end{figure}

\begin{theorem}\label{the:SPRINTER_expected_tokens}
    The probability distribution of the number of generated tokens is given as: 
    \[
\mathbb{P}(N_\text{SPRINTER}=i) =
\begin{cases}
\eta_\text{TP}^{i} (1-\eta_\text{TP}) & i < r, \\
\eta_\text{TP}^r(\eta_\text{FP})^{i-r}(1-\eta_\text{FP}) & i\geq r.
\end{cases}
\]
The expected number of generated tokens is given as:

    \begin{align}
\label{eq:SPRINT_EXP_TOKEN}
        \mathbb{E}(N_\text{SPRINTER})=       \frac{\eta_\text{TP}-\eta_\text{TP}^r}{1-\eta_\text{TP}} + \frac{\eta_\text{TP}^r}{1-\eta_\text{FP}}.
        \end{align}
\end{theorem}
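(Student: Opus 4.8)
\textbf{Proof Proposal for Theorem \ref{the:SPRINTER_expected_tokens}.}

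The plan is to directly compute the distribution of $N_\text{SPRINTER}$ by modeling the sequential verification process as a sequence of independent Bernoulli trials whose success probability changes at token $r$. For tokens $1, \dots, r$ the ground truth is that the token is acceptable, so the verifier keeps going (``success'') with probability $\eta_\text{TP}$ and rejects (``failure'') with probability $1 - \eta_\text{TP}$; for tokens $r+1, r+2, \dots$ the ground truth is that the token is unacceptable, so the verifier keeps going with probability $\eta_\text{FP}$ and rejects with probability $1 - \eta_\text{FP}$. The event $\{N_\text{SPRINTER} = i\}$ for $i < r$ is exactly the event that the first $i$ acceptable tokens are (mistakenly) passed and the $(i+1)$-th acceptable token is correctly rejected, which has probability $\eta_\text{TP}^{i}(1-\eta_\text{TP})$. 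For $i \geq r$, the event is that all $r$ acceptable tokens are passed (probability $\eta_\text{TP}^r$), then the next $i - r$ unacceptable tokens are passed (probability $\eta_\text{FP}^{i-r}$), and the $(i+1)$-th token is rejected (probability $1 - \eta_\text{FP}$), giving $\eta_\text{TP}^r \eta_\text{FP}^{i-r}(1-\eta_\text{FP})$. This establishes the piecewise pmf.

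For the expectation, I would split the sum at $r$:
\begin{align}
\mathbb{E}(N_\text{SPRINTER}) = \sum_{i=0}^{r-1} i\,\eta_\text{TP}^{i}(1-\eta_\text{TP}) + \sum_{i=r}^{\infty} i\,\eta_\text{TP}^{r}\eta_\text{FP}^{i-r}(1-\eta_\text{FP}). \nonumber
\end{align}
The first sum is a truncated version of the mean of a geometric-type series; the second, after substituting $j = i - r$, splits as $\eta_\text{TP}^r(1-\eta_\text{FP})\sum_{j\geq 0}(j+r)\eta_\text{FP}^{j} = \eta_\text{TP}^r\bigl(\tfrac{\eta_\text{FP}}{1-\eta_\text{FP}} + r\bigr)$. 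A cleaner route that avoids the truncated sum is to write $N_\text{SPRINTER} = \min(N_1, r) + \mathbf{1}\{N_1 \geq r\}\cdot N_2$, where $N_1 \sim \text{Geom}(1-\eta_\text{TP})$ counts how far the verifier would go if all tokens were acceptable and $N_2 \sim \text{Geom}(1-\eta_\text{FP})$ counts the overshoot among unacceptable tokens, then use $\mathbb{P}(N_1 \geq r) = \eta_\text{TP}^r$, $\mathbb{E}[\min(N_1,r)] = \tfrac{\eta_\text{TP} - \eta_\text{TP}^r}{1-\eta_\text{TP}}$, and $\mathbb{E}[N_2] = \tfrac{\eta_\text{FP}}{1-\eta_\text{FP}}$; combining the pieces (noting the geometric memorylessness makes $N_2$ independent of the conditioning event) yields exactly \eqref{eq:SPRINT_EXP_TOKEN}, up to matching the indexing convention the authors use for which token count the ``$r$-th acceptance'' corresponds to.

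The main obstacle I anticipate is bookkeeping around the indexing convention — whether $N_\text{SPRINTER} = i$ means ``the $i$-th token is the first rejected'' or ``$i$ tokens were accepted,'' and correspondingly whether the sums start at $0$ or $1$. The stated pmf has $\mathbb{P}(N_\text{SPRINTER}=i) = \eta_\text{TP}^i(1-\eta_\text{TP})$ for $i < r$, which is the pmf of a geometric variable counting failures-before-first-success shifted appropriately, so I would verify at the boundary $i = r$ that both branches are mutually consistent with the normalization $\sum_i \mathbb{P}(N_\text{SPRINTER}=i) = 1$ (the first branch contributes $1 - \eta_\text{TP}^r$ and the second contributes $\eta_\text{TP}^r$, confirming consistency). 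Once the convention is pinned down, the remaining algebra is a routine manipulation of geometric series, and I would present it compactly rather than expanding every step.
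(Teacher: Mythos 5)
Your main argument --- deriving the pmf from independent verifier decisions with acceptance probability $\eta_\text{TP}$ on each of the first $r$ (acceptable) tokens and $\eta_\text{FP}$ thereafter, then splitting $\mathbb{E}[N_\text{SPRINTER}]$ at $i=r$ and evaluating the truncated and shifted geometric series (your second term $\eta_\text{TP}^r\bigl(\tfrac{\eta_\text{FP}}{1-\eta_\text{FP}}+r\bigr)$ agrees with the paper's $\eta_\text{TP}^r\bigl(\tfrac{1}{1-\eta_\text{FP}}+r-1\bigr)$) --- is correct and is essentially the paper's proof. Two small corrections: for $i<r$ the first $i$ tokens are \emph{correctly} passed and the $(i{+}1)$-th is \emph{mistakenly} rejected (the reverse of your wording), and in your alternative decomposition the convention $\mathbb{P}(N_1=k)=\eta_\text{TP}^k(1-\eta_\text{TP})$ gives $\mathbb{E}[\min(N_1,r)]=\tfrac{\eta_\text{TP}-\eta_\text{TP}^{r+1}}{1-\eta_\text{TP}}$ rather than $\tfrac{\eta_\text{TP}-\eta_\text{TP}^{r}}{1-\eta_\text{TP}}$; it is precisely this extra $\eta_\text{TP}^r$ that turns $\eta_\text{TP}^r\tfrac{\eta_\text{FP}}{1-\eta_\text{FP}}$ into the stated $\tfrac{\eta_\text{TP}^r}{1-\eta_\text{FP}}$.
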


The proof of Theorem \ref{the:SPRINTER_expected_tokens} can be found in the Appendix \ref{sec:expec_proof}. To gain more insights from this result, Fig. \ref{fig:token_tp_tradoff} illustrates the trade-off between $\eta_\text{TP}$ and $\mathbb{E}(N_\text{SPRINTER})$ for two different values of $\eta_\text{FP}$ assuming that $r = 5$. The figure indicates that as $\eta_\text{TP}$ approaches 1,  even with a verifier that has a substantial $\eta_\text{FP}$ (i.e. 0.5), the additional number of tokens that are accepted past $r$ is marginal. This implies that if an estimate of $r$ could be attained from the training data, then fixing $\eta_\text{FP}$ and varying $\eta_\text{TP}$, for example, could bring insights into how well the verifier must perform to generate close to the ideal $r$ tokens. Essentially, $\eta_\text{TP}$ and $\eta_\text{FP}$ can be varied to determine the optimal false positive and negative rates such that the chances of early and delayed rejection occurring potentially caused by the verifier are minimized. 

\textbf{(c) Latency Analysis and Computational Cost}. 
Given the result in \ref{the:SPRINTER_expected_tokens}, we now derive the latency incurred by \textit{SPRINTER} under the scenario in Figure \ref{fig:token_generation}. Let $t_d$, $t_t$ and $t_v$ represent the time required to inference the draft, target and verifier models respectively. We also assume  $t_v \leq t_d$, i.e., the time it takes to run the verifier is smaller than running the draft model. Under the above assumption, the next result characterizes the expected stopping time (i.e., the average time before the first rejection by the verifier). 

\begin{figure}
    \centering
    \includegraphics[width=0.9\linewidth]{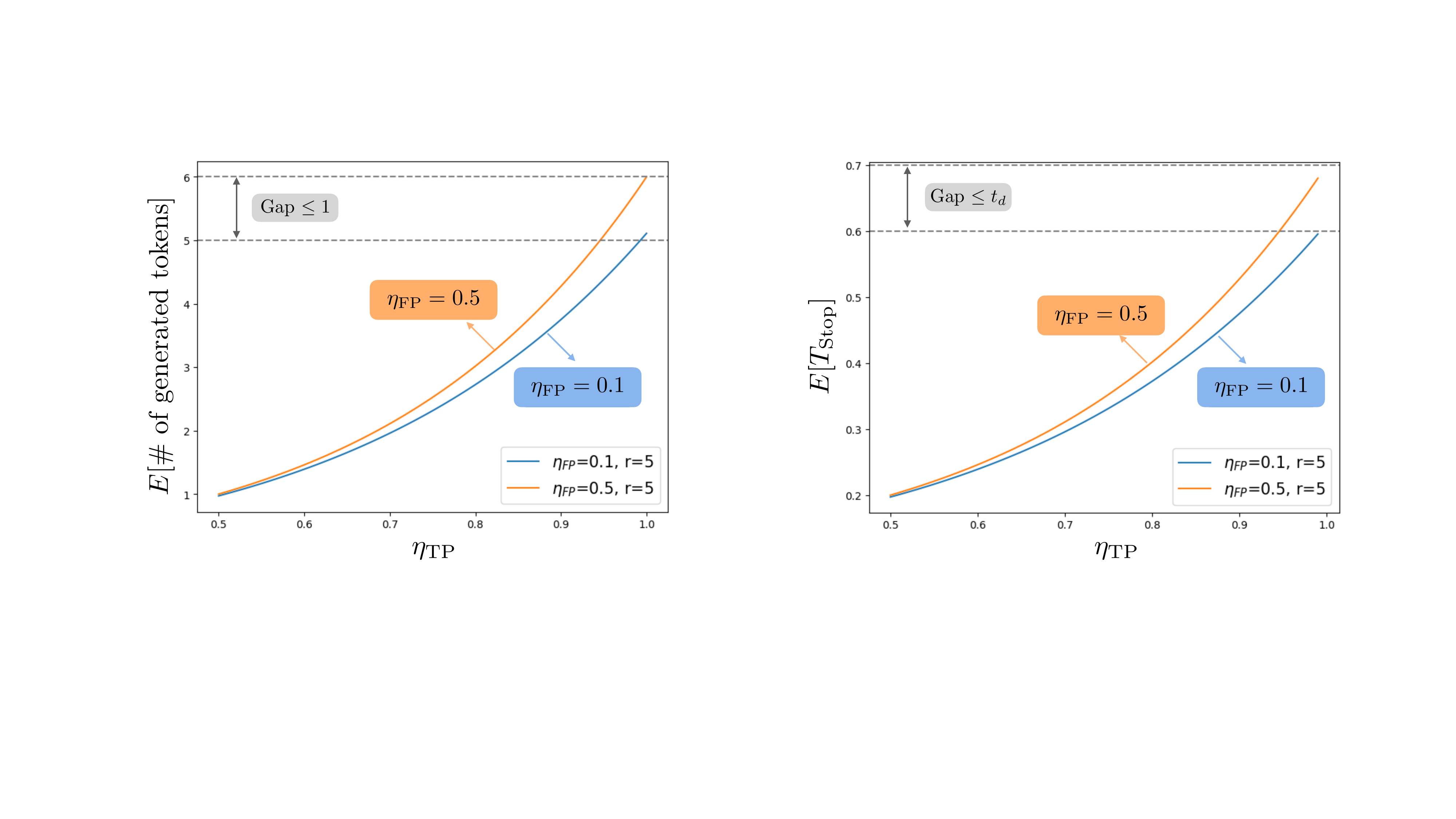}\vspace{-10pt}
    \caption{Illustration of expected stopping time of \textit{SPRINTER} as a function of true-positive rate ($\eta_{TP}$) for two different values of $\eta_{FP}$ when $r=5$ and $t_d = 0.1$. We can again observe that as long as $\eta_{FP}\leq 0.5$, the average stopping time (shown as the \textit{``Gap"}) generated by \textit{SPRINTER} never exceeds $t_d$.}
    \label{fig:latency_TP_FP}
\vspace{-10pt}
\end{figure}

\begin{theorem}\label{the:SPRINTER_expected_latency}
    The expected stopping time is given as:
    \begin{align}
        \label{eq:SPRINT_EXP_LATENCY}
        & \mathbb{E}[T_\text{Stop}] =  \frac{(1-\eta_{\text{TP}}^r) t_d}{1-\eta_\text{TP}} + \frac{\eta_\text{TP}^rt_d}{1-\eta_\text{FP}}.
    \end{align} 
\end{theorem}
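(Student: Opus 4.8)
The plan is to compute the expected stopping time by conditioning on $N_{\text{SPRINTER}}$, the number of tokens accepted by the verifier before its first rejection, whose distribution is already given by Theorem \ref{the:SPRINTER_expected_tokens}. First I would observe that under \textit{SPRINTER}, if the verifier accepts the first $i$ tokens and rejects the $(i+1)$-th, the process runs the draft model $M_q$ a total of $i+1$ times (one generation for each of the $i$ accepted tokens plus one for the token that is ultimately rejected) and runs the verifier $i+1$ times as well. However, since the stopping time here refers to the time \emph{before} the first rejection --- i.e., the time spent generating the consecutive stream of accepted draft tokens --- the relevant count is that $N_{\text{SPRINTER}}$ draft-model calls (each costing $t_d$) and $N_{\text{SPRINTER}}$ verifier calls (each costing $t_v$) are made, so that $T_{\text{Stop}} = N_{\text{SPRINTER}}(t_d + t_v)$ as a random variable, or possibly just $N_{\text{SPRINTER}} t_d$ if the verifier latency is absorbed/pipelined; I would reconcile this with the stated formula, which has only $t_d$, meaning the intended accounting charges $t_d$ per accepted token (the $t_v \le t_d$ assumption presumably being used to justify pipelining the verifier behind the draft generation so it contributes no additional wall-clock time).

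Next I would take expectations: $\mathbb{E}[T_{\text{Stop}}] = t_d \cdot \mathbb{E}[N_{\text{SPRINTER}}]$. Then I would simply substitute the closed form from equation \eqref{eq:SPRINT_EXP_TOKEN} of Theorem \ref{the:SPRINTER_expected_tokens}, namely $\mathbb{E}(N_\text{SPRINTER}) = \frac{\eta_\text{TP}-\eta_\text{TP}^r}{1-\eta_\text{TP}} + \frac{\eta_\text{TP}^r}{1-\eta_\text{FP}}$, and multiply through by $t_d$. Rewriting $\eta_\text{TP} - \eta_\text{TP}^r = \eta_\text{TP}(1 - \eta_\text{TP}^{r-1})$ versus the $(1-\eta_\text{TP}^r)$ that appears in the target expression \eqref{eq:SPRINT_EXP_LATENCY} is the one place I need to be careful: I would check whether the paper's convention for $N_{\text{SPRINTER}}$ counts from $i=0$ or $i=1$, since $\frac{1-\eta_\text{TP}^r}{1-\eta_\text{TP}} = 1 + \eta_\text{TP} + \cdots + \eta_\text{TP}^{r-1}$ differs from $\frac{\eta_\text{TP} - \eta_\text{TP}^r}{1-\eta_\text{TP}} = \eta_\text{TP} + \cdots + \eta_\text{TP}^{r-1}$ by exactly $1$, which corresponds to an off-by-one in whether the first draft call is charged. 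I would present the argument so that $T_{\text{Stop}}$ counts one $t_d$ per verifier-accepted token including the initial one, giving the numerator $(1-\eta_\text{TP}^r)$ as written.

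The main obstacle I anticipate is not the algebra --- which is a one-line substitution --- but pinning down precisely what ``stopping time'' means operationally and why the verifier time $t_v$ and the target time $t_t$ do not appear in the final expression. I would argue that $t_t$ is excluded because $T_{\text{Stop}}$ measures only the pre-rejection phase (the eventual target call is a separate, post-stopping cost), and $t_v$ is excluded because the assumption $t_v \le t_d$ lets the verifier run concurrently with (or be hidden behind) the draft model's generation of the next token, so each accepted token contributes wall-clock cost $\max(t_d, t_v) = t_d$. Once this modeling convention is made explicit, the proof reduces to taking $\mathbb{E}[T_{\text{Stop}}] = t_d\,\mathbb{E}[N_{\text{SPRINTER}}]$ and invoking Theorem \ref{the:SPRINTER_expected_tokens}, and I would close by noting the two regimes: the first term $\frac{(1-\eta_\text{TP}^r)t_d}{1-\eta_\text{TP}}$ captures the ``early rejection'' contribution (stopping within the first $r$ acceptable tokens at rate governed by $\eta_\text{TP}$), and the second term $\frac{\eta_\text{TP}^r t_d}{1-\eta_\text{FP}}$ captures the ``delayed rejection'' tail where all $r$ good tokens are accepted and the verifier keeps going at rate $\eta_\text{FP}$.
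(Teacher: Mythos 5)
Your proposal is correct and follows essentially the same route as the paper: the paper defines the stopping time through $\mathbb{E}[T_\text{Stop}]=\sum_{k}t_d(k+1)\,\mathbb{P}(T_\text{Stop}=k)$, i.e.\ exactly the convention $T_\text{Stop}=(N_\text{SPRINTER}+1)t_d$ that you settle on (the extra $t_d$ being the draft call that produces the token the verifier ultimately rejects), and it likewise drops $t_v$ by declaring the verifier's runtime negligible under $t_v\le t_d$. Your off-by-one reconciliation is exactly right, since $\frac{\eta_\text{TP}-\eta_\text{TP}^r}{1-\eta_\text{TP}}+1=\frac{1-\eta_\text{TP}^r}{1-\eta_\text{TP}}$; the only difference is that you obtain the result in one line by reusing Theorem \ref{the:SPRINTER_expected_tokens}, whereas the paper re-derives the finite and infinite geometric sums from scratch --- your reduction is the cleaner of the two.
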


Proof of the above theorem can be found in Section \ref{sec:lat_proof}. From Theorem \ref{the:SPRINTER_expected_latency}, we can observe that the expected stopping time increases as $\eta_{TP}$ increases. We also observe that the dependence on the false-positive rate $\eta_{FP}$ is marginal compared to $\eta_{TP}$. Furthermore, Fig. \ref{fig:latency_TP_FP} shows that as $\eta_{TP}$ approaches 1, even with a verifier having a relatively high $\eta_{FP}$ (e.g., 0.5), the additional expected stopping time remains minimal (Gap $\leq t_d$) when $r=5, t_d= 0.1$.

\textbf{Savings in Computation}. We now compare the total computational cost of \textit{one run of SD} versus \textit{one run of SPRINTER}. For simplicity, assume that one has a perfect verifier (i.e., $\eta_{FP}=0$ and $\eta_{TP}=1$). In SD, $\gamma$ tokens are first generated by the draft model, followed by parallel verification done by the target model. If we denote $F_d, F_t$ and $F_v$ as the number of flops to run the draft, target and verifier models once, then we have 
\begin{align}
    \text{SD-Flops}(\gamma) = \gamma F_d + \gamma F_t\\
    \text{\textit{SPRINTER}-Flops}(\gamma) = \gamma F_d + \gamma F_v +  F_t
\end{align}
If $F_{v}\ll F_d \ll F_t$, we can observe that computational savings from \textit{SPRINTER} can be significant and grow proportional to $(\gamma-1)F_t$ due to sequential verification through a low-complexity model (additional calculations showing the computational savings for model pairs in Section \ref{sec:flops_append}).

\subsection{Verifier Training and Architecture}
\label{sec:learn_veri}
\textbf{Verifier Training Methodology}. The verifier $V$ is a binary classifier trained to predict whether a token should be accepted or rejected. Specifically, $V()$ can take as input various latent features derived from the draft model (e.g. embedding of the draft token $x$, probability distribution of the LLM's vocabulary). We denote the input to the verifier as $s(x,\text{prefix})$. The data used for training the verifier can be prepared as follows: for a given prefix, a token $x$ is sampled from the draft model $M_q$. We also run the target model $M_p$ and compute $p(x)$.  Subsequently, binary labels are determined for each prefix and token pair by assigning $1$ if $\frac{q(x)}{p(x)} \leq 1$ and $0$ otherwise. \revision{ However, rather than comparing the ratio $\frac{q(x)}{p(x)}$ against a threshold of 1, we can increase (decrease) the threshold to $\lambda$ to bias the verifier to accept (reject) more draft tokens, which would increase (decrease) $\eta_\text{FP}$ or decrease (increase) $\eta_\text{TP}$.} Additionally, adjusting the inference threshold $\tau$ (see Algorithm \ref{alg:alg_short}) would achieve a similar effect. Thus, varying $\lambda$ and $\tau$ serve as the two hyperparameters which allow us to influence $(\eta_\text{FP},\eta_\text{TP})$. 

Our second observation is that during inference, \textit{SPRINTER} would face input as prefixes consisting of interleaved tokens generated in the past by draft and target models. Hence, during training, we expose the verifier to the following possible inputs: (a) an original prefix, (b) a prefix supplied with completions only from the draft model, (c) a prefix supplied with completions only from the target model (d) a prefix  with tokens from both draft and target models. To optimize the verifier's performance, we ensure an equal proportional of prefixes from each category.

\textbf{Verifier architecture used for evaluation}. For our experiments, the verifier was implemented as a fully connected linear layer followed by a sigmoid activation, containing significantly fewer parameters than $M_p$ and $M_q$. $V$ takes as input the last embedding of the previous token and is trained with an Adam optimizer assuming binary cross entropy loss. Our results indicate that a single layer is sufficient to achieve strong performance while maintaining high efficiency, as shown in Fig. \ref{fig:ROC_wiki_LM1B_12} on the Wiki-summary and LM1B datasets.  We observed that training thresholds of $\lambda=1.2$ enabled \textit{SPRINTER} to attain an effective performance on Wiki-Summary and LM1B datasets.

\vspace{-6pt}
\section{Experiments and Evaluation}
\label{sec:results}

\textbf{Evaluation goals.} To quantify the effectiveness of \textit{SPRINTER} we present the following set of results: (a) We measure the quality of responses generated by \textit{SPRINTER} using two performance metrics: the win-tie rates and ROUGE scores.  \revision{(b) We compare the latencies incurred by \textit{SPRINTER}, SD \cite{leviathan2023fast}, SpecDec++ \cite{huang2024specdec++}, AdaEDL \cite{agrawal2024adaedl}, and Mentored Decoding \cite{tranthien2024mentored} in generating text completions given a prefix. (c) We compare the effectiveness of \textit{SPRINTER} in completing tasks from Spec-Bench \cite{xia-etal-2024-unlocking} with  Medusa \cite{cai2024medusa} and Eagle2 \cite{li2406eagle}.} (d) To investigate the impact of verifier training and inference hyperparameters on \textit{SPRINTER}, we perform an ablation study to observe how $\eta_\text{TP}$ and $\eta_\text{FP}$ affect the ROC. (e)  As part of our qualitative comparison, Figure~\ref{fig: quality_latency_comput}(b) and Section~\ref{sec:prompts_ex} present example responses from \textit{SPRINTER} and SD, illustrating that \textit{SPRINTER} is capable of producing coherent and contextually appropriate tokens without strictly imitating the target model’s output distribution.

\textit{Code for \textit{SPRINTER} is available at \cite{codesprinter}}.


\textbf{Dataset and Model Architecture}. 
\revision{We use the WiKi-summary \cite{scheepers2018improving}, LM1B \cite{chelba2013one}, and Spec-Bench  \cite{xia-etal-2024-unlocking} datasets for evaluation.  Wiki-Summary is a collection of Wikipedia article summaries designed for text summarization tasks. 
The LM1B (One Billion Word Benchmark) Dataset is a large-scale corpus for language modeling and text generation tasks, extracted from news articles. Spec-Bench compiles questions from various LLM evaluation datasets (e.g. MT-Bench, GSM8K) covering different task categories including summarization and translation. We adopted a similar experimental setup as the prior works: \cite{github_specdec,chakraborty2024transferqstarprincipled}. We present results for three (draft, target) model pairs: GPT-Neo-125M \cite{GPT-Neo-125m}/GPT-Neo-1.3B \cite{GPT-Neo-1.3b}, GPT2-Small (124M param.) \cite{GPT2}/GPT2-XL (1.5B param.) \cite{GPT2-XL}, and Vicuna 68M \cite{Vicuna68m}/7B \cite{Vicuna7b}.}

\textbf{\underline{Quality Analysis}}. We investigate the quality of responses generated by \textit{SPRINTER} compared with standard SD. To quantify the completion quality of \textit{SPRINTER}, we report results using a) win-tie rates, and b) ROUGE scores. 

\textit{Win-tie rate} metric has been used extensively in LLM research \cite{rafailov2024direct,shen2024learning}. Win-tie rate measurements are taken by presenting GPT-4 with responses from two methods (\textit{SPRINTER}-generated vs SD-generated) and prompting it to decide which response is better based on criteria provided by the user (additional details on win-tie rates are provided in Section \ref{sec:add_quali_results}).   Table \ref{tab:winrate} reports win-tie rate comparisons of \textit{SPRINTER} with SD,  where  GPT-4 is provided with an initial prefix and completions from \textit{SPRINTER} and SD. \revision{Approximately 30\% of each prompt was given as input to each technique, which were constrained to generate 20 tokens per prompt.} The table indicates that \textit{SPRINTER} using the GPT-Neo model pair can generate responses of comparable quality to SD. This is especially observed on Wiki-Summary, which wins on-average 45.2\% of the time against SD, indicating that \textit{SPRINTER} suffers minimal quality degradation. 

\begin{table}[t]
    \centering
    \resizebox{0.45\textwidth}{!}{%
    \begin{tabular}{l l c c}
        \hline
        $M_q/M_p$ & \textcolor{teal}{WiKi-Summary} & \textcolor{blue}{LM1B} \\
        \hline
        \textcolor{purple}{GPT-Neo-125M/1.3B} &
        \cellcolor{teal!15}45.2 $\pm$ 3.12 &
        \cellcolor{blue!15}35.4 $\pm$ 5.08 \\
        
        \textcolor{orange}{GPT2-Small/XL} &
        \cellcolor{teal!15}41.0 $\pm$ 6.82 &
        \cellcolor{blue!15}41.6 $\pm$ 3.98 \\
        \hline
    \end{tabular}}
    \caption{Average win-tie rates of \textit{SPRINTER} against SD for GPT-Neo-125M/GPT-Neo-1.3B and GPT2-Small/GPT2-XL pairs.}
    \label{tab:winrate}
\end{table}

\begin{figure}[t]
    \centering
    \includegraphics[width=0.9\linewidth]{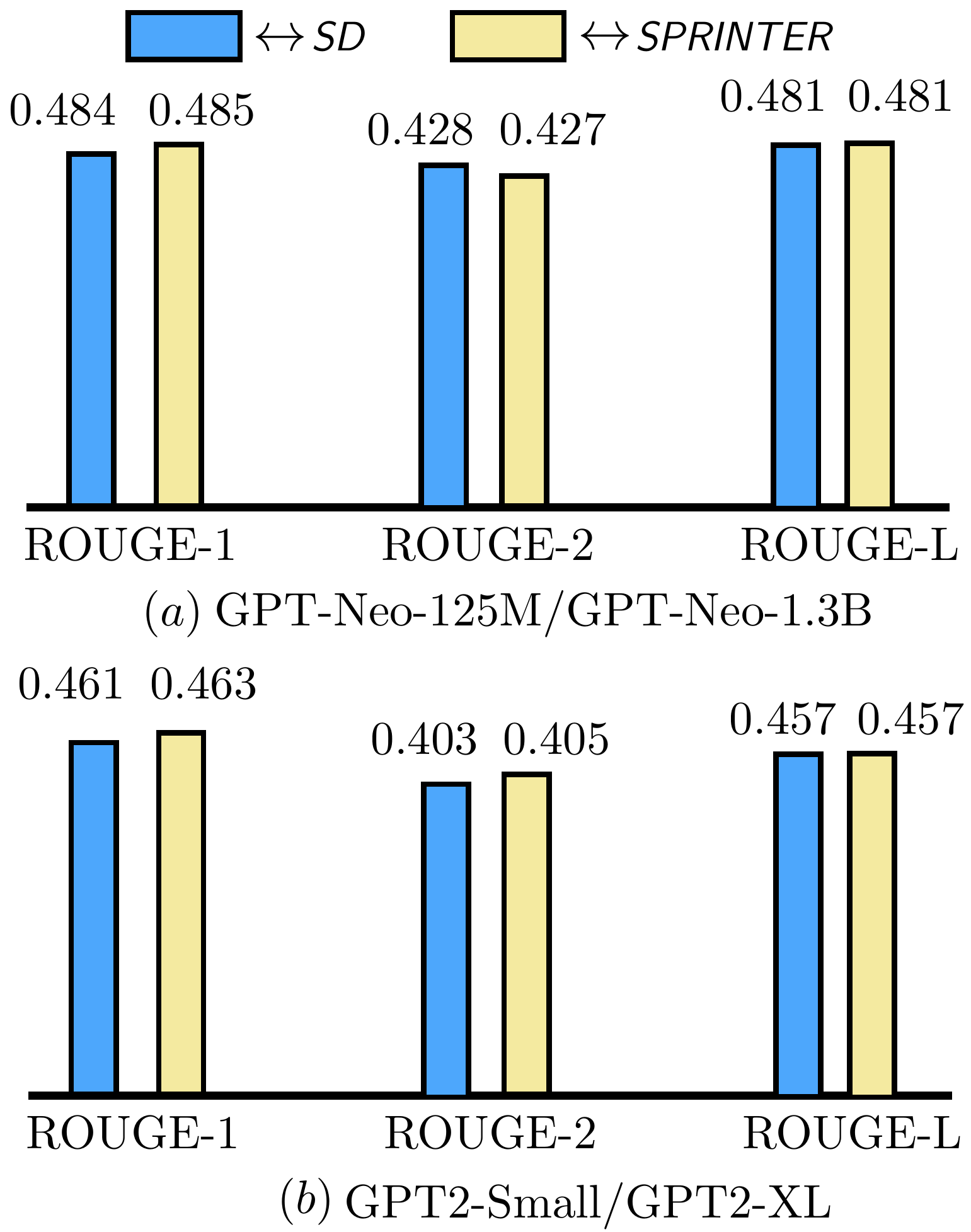}
    \caption{The ROUGE metrics (ROUGE-1, ROUGE-2, ROUGE-2) of \textit{SPRINTER} vs SD for (a) GPT-Neo-125M/GPT-Neo-1.3B and (b) GPT2-Small/GPT2-XL model pairs. This demonstrates that even with faster inference speeds, \textit{SPRINTER} experiences only a minimal drop in quality compared to SD.}
    \label{fig:Rouge_overall}
    \vspace{-5pt}
\end{figure}
\arrayrulecolor{black}
\setlength{\arrayrulewidth}{0.4pt}
\begin{table*}[t]
    \centering
    \begin{tabular}{|l|l|c|c|c|c|c|c|}
      \hline
        Model Pair & &\multicolumn{3}{|c|}{\textcolor{teal}{WiKi-Summary}} & \multicolumn{3}{|c|}{\textcolor{blue}{LM1B}}\\
        \hline
       \textcolor{purple}{\multirow{5}{5em}{ GPT-Neo-125M / GPT-Neo-
1.3B}}&Methods  & Avg Tokens & Time (ms) & Speedup  & Avg Tokens & Time (ms) & Speedup \\
       \hhline{~-------}
       &$SD$  & \cellcolor{teal!15}  2.17 & \cellcolor{teal!15} 0.92  $\pm$  0.32& \cellcolor{teal!15} 1x & \cellcolor{blue!15}1.95 & \cellcolor{blue!15}0.84  $\pm$   0.21 & \cellcolor{blue!15}1x\\
       \hhline{~-------}
       &$Specdec^{++}$ & \cellcolor{teal!15} 1.16 & \cellcolor{teal!15} 0.75 $\pm$  0.25 &  \cellcolor{teal!15} 1.23x & \cellcolor{blue!15}1.08 & \cellcolor{blue!15}0.61  $\pm$  0.24 & \cellcolor{blue!15}1.38x\\
       \hhline{~-------}
       &$AdaEDL$ &\cellcolor{teal!15} 2.66 & \cellcolor{teal!15} 0.72  $\pm$  0.30 &\cellcolor{teal!15} 1.28x & \cellcolor{blue!15}1.91 & \cellcolor{blue!15}0.56   $\pm$  0.12 & \cellcolor{blue!15}1.50x \\
       \hhline{~-------}
       &$MentoredDec$ &\cellcolor{teal!15} 3.39 & \cellcolor{teal!15}0.76 $\pm$ 0.18 & \cellcolor{teal!15} 1.21x & \cellcolor{blue!15}3.11 & \cellcolor{blue!15}0.62 $\pm$ 0.13 & \cellcolor{blue!15}1.35x \\
    \hhline{~-------}
       &\textit{SPRINTER} & \cellcolor{teal!15}\textbf{11.10} & \cellcolor{teal!15}\textbf{0.56  $\pm$  0.22} & \cellcolor{teal!15}\textbf{1.64x} & \cellcolor{blue!15}\textbf{8.32} & \cellcolor{blue!15}\textbf{0.46  $\pm$  0.09} & \cellcolor{blue!15}\textbf{1.83x}  \\
       \hline
       \hline
       \textcolor{orange}{\multirow{5}{5em}{GPT2-Small/ GPT2-XL}}&$SD$  &  \cellcolor{teal!15}2.01 & \cellcolor{teal!15}0.84 $\pm$ 0.24 & \cellcolor{teal!15}1x & \cellcolor{blue!15}2.05 & \cellcolor{blue!15}0.73 $\pm$ 0.15 &\cellcolor{blue!15} 1x\\
       \hhline{~-------}
       &$Specdec^{++}$ &\cellcolor{teal!15}1.82  &\cellcolor{teal!15} 0.72 $\pm$ 0.23 & \cellcolor{teal!15}1.18x & \cellcolor{blue!15}1.04 & \cellcolor{blue!15}0.61 $\pm$ 0.28  & \cellcolor{blue!15}1.20x\\
       \hhline{~-------}
       &$AdaEDL$ & \cellcolor{teal!15}1.96 & \cellcolor{teal!15}0.73 $\pm$ 0.25 &\cellcolor{teal!15} 1.16x &\cellcolor{blue!15} 1.52 & \cellcolor{blue!15}0.64 $\pm$ 0.19 & \cellcolor{blue!15}1.14x \\
       \hhline{~-------}
       &$MentoredDec$ &\cellcolor{teal!15} 3.20 &\cellcolor{teal!15} 0.60 $\pm$ 0.19 &\cellcolor{teal!15} 1.40x &\cellcolor{blue!15} 3.12 & \cellcolor{blue!15}0.55 $\pm$ 0.07 & \cellcolor{blue!15}1.33x \\
       \hhline{~-------}
       &\textit{SPRINTER} & \cellcolor{teal!15}\textbf{10.83} & \cellcolor{teal!15}\textbf{0.49 $\pm$ 0.20} & \cellcolor{teal!15}\textbf{1.69x} & \cellcolor{blue!15}\textbf{6.56} & \cellcolor{blue!15}\textbf{0.44 $\pm$ 0.12} & \cellcolor{blue!15}\textbf{1.66x}  \\
       \hline
    \end{tabular}
    \caption{\revision{Latency speedups for \textit{SPRINTER} relative to SD \cite{leviathan2023fast}, AdaEDL \cite{agrawal2024adaedl}, Specdec++ \cite{huang2024specdec++} and MentoredDec \cite{tranthien2024mentored} using GPT families as the draft/target models on Wiki-Summary and LM1B datasets.} }
    \label{tab:latency_wiki}
\end{table*}

\textit{ROUGE score comparison}. To further illustrate the comparable quality between \textit{SPRINTER} and SD, Figures \ref{fig:Rouge_overall} show ROUGE scores \cite{lin-2004-rouge} of the responses made by \textit{SPRINTER} and SD respectively with the reference summaries in Wiki-Summary for both GPT-Neo and GPT2 model pairs. The ROUGE scores measure differing levels of similarity  between a provided ``candidate" summary  and reference summary \cite{lin-2004-rouge}. 
For both model pairs, the figures indicate that \textit{SPRINTER} is able to generate responses that attain very similar ROUGE scores to SD.

\revision{
\noindent\textbf{\underline{Latency Speedups}}. Table \ref{tab:latency_wiki} reports results comparing the latency speedups achieved with \textit{SPRINTER} relative to the methods shown in Figure \ref{fig:frameworkSPrinter}(c) for the Wiki-Summary and LM1B datasets. 
Given a prefix, 20 additional tokens were generated per prompt by each method. We report the ``Avg Token" as the accepted token generated by $M_q$ per single run of the sampling process. As the table indicates, \textit{SPRINTER} achieves higher speedup improvements relative to SD compared to the other baselines, without being restricted to generate tokens that match the target model distribution.}

\begin{table}[t]
\centering
\resizebox{0.45\textwidth}{!}{
\begin{tabular}{|l|c|c|c|}
\hline
\textcolor{teal}{\textit{SPRINTER} vs} & Win-Tie rate & Speedup & Flops \\
\hline
SD        & \cellcolor{gray!15}41  & \cellcolor{gray!15}1.64x & \cellcolor{gray!15}$F_d(\gamma) +  F_t(\gamma)$ \\
Medusa    & \cellcolor{yellow!20}38  &\cellcolor{yellow!20} 1.51x & \cellcolor{yellow!20}$K F_d(\gamma) + F_t(\tau_n )$\\
Eagle2    & \cellcolor{orange!20}32  &\cellcolor{orange!20} 1.17x &\cellcolor{orange!20} $F_t(\tau_n)  + \tau_d F_d(\frac{\tau_n}{\tau_d})$\\
Mentored Dec & \cellcolor{brown!10}52 & \cellcolor{brown!10}1.35x & \cellcolor{brown!10}$F_d(\gamma) +  F_t(\gamma)$\\
\hline
\end{tabular}}
\caption{\revision{Comparison of \textit{SPRINTER} with state-of-the-art methods (Medusa \cite{cai2024medusa}, Eagle2 \cite{li2406eagle} and Mentored Decoding \cite{tranthien2024mentored}) in terms of win-tie rate and speedup. Medusa and Eagle2 FLOPS to process $\gamma$ tokens from \cite{christopher2024speculative}, where $\tau_n, \tau_d, K$ are the number of nodes/depth of the draft tree and Medusa heads respectively.}}
\label{tab:sprinter_comparison}
\end{table}

\revision{
\noindent\textbf{\underline{Quality-Latency-Flops Tradeoffs}}.
We further examine the trade-offs between quality and latency using Spec-Bench \cite{xia-etal-2024-unlocking} with the Vicuna 68M/7B pair, comparing \textit{SPRINTER} against SD, Medusa \cite{cai2024medusa}, and Eagle2 \cite{li2406eagle}. For \textit{SPRINTER}, we employ a verifier trained on LM1B responses from the GPT2-Small/GPT2-XL pair. Each method was constrained to generate 20 tokens per task. We also compare \textit{SPRINTER} and the lossy method Mentored Decoding \cite{tranthien2024mentored} on Wiki-Summaries using the GPT-Neo model pair. As shown in Table~\ref{tab:sprinter_comparison}, \textit{SPRINTER} requires significantly fewer FLOPs than SD and other variants, and achieves lower latency while preserving high output quality.



\textbf{\underline{Verifier Ablation Study}}. 
Rather than strictly forcing the verifier to only accept draft tokens if the underlying ratio $\left(\frac{q_i(x|\text{prefix})}{p_i(x|\text{prefix})}\right) \leq 1$, the threshold of $1$ could be changed to a parameter $\lambda>1$; allowing for more tokens to be deemed acceptable, providing another method for \textit{SPRINTER} to generate tokens that deviate from the target distribution and accelerate inference. 
We show ROC curves for the verifier on the LM1B and Wiki dataset when trained at thresholds $\lambda = $ 1, 1.2, and 1.5 respectively in Section \ref{sec:add_results}.

\begin{table}[t]
\centering
\resizebox{0.47\textwidth}{!}{
\begin{tabular}{|c|cc|cc|}
\hline
\multirow{2}{5em}{ Eval\textbackslash Train }& \multicolumn{2}{c|}{\textbf{Latency}} & \multicolumn{2}{c|}{\textbf{Quality}} \\

     & \textbf{WiKi} & \textbf{Lm1b} & \textbf{WiKi} & \textbf{Lm1b} \\
\hline
\textbf{WiKi}  & \cellcolor{violet!15} 0.56 ± 0.22 & 0.57 ± 0.26 & \cellcolor{violet!15} 45.2 ± 3.12 & 47.6 ± 5.78 \\
\textbf{Lm1b}  & 0.51 ± 0.09 & \cellcolor{violet!15} 0.46 ± 0.09 & 38.2 ± 1.94 & \cellcolor{violet!15}39.6 ± 1.94 \\
\hline
\end{tabular}}
\caption{\revision{Comparison of Latency and Quality between WiKi and Lm1b datasets. The verifier maintains comparable latency and quality when trained on one dataset and evaluated on another. Highlighted entries show the average win-tie rates using verifiers trained/evaluated on the same dataset.}}
\label{tab:latency_quality}
\end{table}
\textbf{\underline{Verifier Transferability}.} We also explore the generalization capability of the verifier and observe that it can be effectively transferred across tasks and datasets. As shown in Table \ref{tab:latency_quality}, the verifier achieves comparable latency and quality (measured by win-tie rate) when a model trained on one dataset is used for inference on another. For example, evaluating a verifier trained on Lm1b attains an average win rate of $47.6$ and $39.6$ on Wiki and Lm1b respectively.}

\vspace{-8pt}
\section{Conclusion}
\vspace{-5pt}
We introduced \textit{SPRINTER}, a sampling framework designed to accelerate LLM inference by leveraging a draft-target model pair along with a lightweight verifier. Our theoretical analysis highlights the trade-offs between inference speed, computational efficiency, and output quality, demonstrating how verifier characteristics, such as false-positive and true-positive rates, influence performance. Through extensive experiments on multiple datasets and model pairs, we showed that \textit{SPRINTER} significantly reduces latency while maintaining high-quality outputs. Our result show that sequential approximate verification can be effective in balancing efficiency and quality, making it a promising approach for scalable and efficient LLM deployment.

\vspace{-6.5pt}
\section*{Acknowledgments}
\vspace{-6.5pt}
This work was supported by NSF grants CAREER 1651492, CCF-2100013, CNS-2209951, CNS-1822071, CNS-2317192, and by the U.S. Department of Energy, Office of Science, Office of Advanced Scientific Computing under Award Number DE-SC-ERKJ422, and NIH Award R01-CA261457-01A1.
\vspace{-6.5pt}
\section*{Impact Statement}
\vspace{-6.5pt}
With the growing use of LLMs in different applications, and the scaling in their size and complexity, inference speed remains a critical bottleneck in real-world applications, affecting latency-sensitive tasks such as conversational AI, real-time translation, and autonomous systems. By introducing the idea of low-complexity sequential verification within the context of speculative decoding, \textit{SPRINTER} can reduce response times and lowers computational costs without significantly compromising on the quality. \textit{SPRINTER} ensures that the target model is not called frequently, resulting in a reduction in energy consumption compared to standard speculative decoding, while ensuring minimal degradation in the quality of its generated responses. Additionally, \textit{SPRINTER} is able to generate these responses at significantly higher speeds compared to SD, making it more feasible for LLMs to be used in time-sensitive applications. 
\vspace{-10pt}

\nocite{langley00}

\bibliography{example_paper}
\bibliographystyle{icml2025}

\newpage
\appendix
\onecolumn
\section{Appendix}
The Appendix is organized as follows: 

\ref{sec:31proof} Proof of Theorem \ref{the:SPRINTER} (Analysis of the distribution of the tokens generated by
SPRINTER)

\ref{sec:expec_proof} Proof of Theorem \ref{the:SPRINTER_expected_tokens} (Analysis of the expected number of generated tokens from SPRINTER)

\ref{sec:lat_proof} Proof of Theorem \ref{the:SPRINTER_expected_latency} (SPRINTER latency analysis)

\ref{sec:Acc_rate} Additional theoretical results

\ref{sec:algorithm_hyper} Full SPRINTER algorithm and hyperparameter tuning

\ref{sec:relatedworks_append} More related works

\ref{sec:prompts_ex} Additional Examples of Responses generated by Sprinter and SD

\ref{sec:flops_append} Flops explainations and evaluations

\ref{sec:add_quali_results} Prompt Design for Win-tie Rate Evaluation

\ref{sec:add_results} Additional experimental results

\subsection{Proof of Theorem \ref{the:SPRINTER} }
\label{sec:31proof}
\begingroup
\renewcommand{\thetheorem}{\ref{the:SPRINTER}} 
\begin{theorem}
The probability of a token $x$ being chosen when running SPRINTER is given as
    \begin{align}
    \label{eq:SPRINT_DIST}
        p_\text{SPRINTER}(x) = (1-\eta_\text{FP})p(x) + \eta_\text{FP} q(x),
    \end{align}
    where $\eta_\text{FP}$ is the false positive rate of the verifier. Furthermore, the total-variation distance between the target and SPRINTER distributions is $d_\text{TV}(p,p_\text{SPRINTER}) = \eta_\text{FP}d_\text{TV}(p,q)$.
\end{theorem}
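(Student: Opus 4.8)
The plan is to analyze the \textit{SPRINTER} sampling process token-by-token by conditioning on the verifier's decision for the current draft token $x$ sampled from $q$. There are exactly two disjoint events that determine whether token $x$ is the one that gets ``chosen'' (i.e., kept as the output token): either the verifier accepts $x$, or the verifier rejects $x$ and then the speculative-style resampling step either re-accepts $x$ with probability $p(x)/q(x)$ or draws a replacement from $p'(x) = \mathrm{norm}(\max(0,p(x)-q(x)))$. First I would write $x$ that is drawn from $q$; the verifier's acceptance probability, conditioned on the realized value $x$, is governed by the true/false positive behavior: if $x$ is a ``good'' token ($q(x)/p(x)\le 1$) it is accepted with probability $\eta_\text{TP}$, and if $x$ is a ``bad'' token ($q(x)/p(x)>1$) it is accepted with probability $\eta_\text{FP}$.

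Next I would compute, for a fixed target token value $x$, the total probability that \textit{SPRINTER} outputs $x$ on a given step, summing the contributions: (i) $x$ drawn from $q$ and accepted by the verifier, and (ii) $x$ drawn from $q$, rejected by the verifier, but re-accepted with probability $p(x)/q(x)$, plus (iii) some other token $x'$ drawn from $q$, rejected, and $x$ then sampled from $p'$. The key simplification is to split on the sign of $p(x)-q(x)$. When $q(x)\le p(x)$ (good token): contribution (i) is $\eta_\text{TP} q(x)$, contribution (ii) is $(1-\eta_\text{TP})q(x)\cdot \frac{p(x)}{q(x)} = (1-\eta_\text{TP})p(x)$, and these already sum to $\eta_\text{TP} q(x) + (1-\eta_\text{TP})p(x)$; one then must check the residual-mass term (iii) vanishes here because $p'(x)=0$ when $p(x)\le q(x)$... wait, actually $p'(x)>0$ exactly when $p(x)>q(x)$, i.e. for good tokens, so (iii) does contribute for good $x$ and must be accounted for. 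I would handle this by the standard speculative-decoding identity that the resampling exactly fills the deficit, then fold in the $\eta_\text{FP}$/$\eta_\text{TP}$ weights. When $q(x)>p(x)$ (bad token): contribution (i) is $\eta_\text{FP} q(x)$, contribution (ii) is $(1-\eta_\text{FP})q(x)\cdot\frac{p(x)}{q(x)} = (1-\eta_\text{FP})p(x)$, and (iii) is $0$ since $p'(x)=0$. Carrying out the bookkeeping carefully, in all cases the per-step output probability collapses to $(1-\eta_\text{FP})p(x) + \eta_\text{FP} q(x)$, which is \eqref{eq:SPRINT_DIST}.

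For the total-variation claim, once \eqref{eq:SPRINT_DIST} is established it is a short computation:
\begin{align}
d_\text{TV}(p,p_\text{SPRINTER}) &= \tfrac12\sum_x \bigl| p(x) - p_\text{SPRINTER}(x)\bigr| = \tfrac12\sum_x \bigl| p(x) - (1-\eta_\text{FP})p(x) - \eta_\text{FP} q(x)\bigr| \nonumber\\
&= \tfrac12\sum_x \eta_\text{FP}\bigl| p(x) - q(x)\bigr| = \eta_\text{FP}\, d_\text{TV}(p,q). \nonumber
\end{align}

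\textbf{Main obstacle.} The delicate part is bookkeeping the resampling contribution (iii) and verifying that the $\eta_\text{TP}$-dependent terms cancel so that only $\eta_\text{FP}$ survives in the final expression — in particular making precise the conditioning on ``a token is rejected then the chain resamples'' and checking that summing the replacement distribution $p'$ against the rejection probabilities reproduces exactly the missing mass, just as in the correctness proof of vanilla speculative decoding, but now with the verifier's imperfect accept/reject probabilities inserted. I would isolate this as the one lemma-like computation and treat the rest as routine.
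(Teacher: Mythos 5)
Your overall strategy is the same as the paper's: split tokens into acceptable ($q(x)/p(x)\le 1$) and unacceptable ($q(x)/p(x)>1$) classes, condition on the verifier's accept/reject decision via $\eta_\text{TP}$ and $\eta_\text{FP}$, and add the mass an acceptable token receives from resampling when some other token is drawn and correctly rejected. Your unacceptable-token branch and your total-variation computation are correct and match the paper. The gap is in the acceptable-token branch, exactly where you flagged uncertainty. For $q(x)\le p(x)$, a token rejected by the verifier is handed to $M_p$, which re-accepts it with probability $\min(1,p(x)/q(x))=1$ --- not $p(x)/q(x)$, which exceeds $1$ here and is not a probability. So contribution (ii) is $(1-\eta_\text{TP})q(x)$, and (i)$+$(ii)$=q(x)$: the $\eta_\text{TP}$-dependence cancels already at this stage, not through the resampling term. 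The resampling term (iii) then contributes
\begin{align}
(1-\eta_\text{FP})\sum_{x'\in x_\text{rej}} q(x')\Bigl(1-\tfrac{p(x')}{q(x')}\Bigr)\cdot \frac{p(x)-q(x)}{\sum_{x''\in x_\text{acc}}\bigl(p(x'')-q(x'')\bigr)} \;=\; (1-\eta_\text{FP})\bigl(p(x)-q(x)\bigr), \nonumber
\end{align}
since the rejected deficit $\sum_{x_\text{rej}}(q-p)$ equals the accepted surplus $\sum_{x_\text{acc}}(p-q)$, giving the total $q(x)+(1-\eta_\text{FP})(p(x)-q(x))=(1-\eta_\text{FP})p(x)+\eta_\text{FP}q(x)$ as claimed. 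With your value $(1-\eta_\text{TP})p(x)$ for (ii), the three contributions would instead sum to $(2-\eta_\text{TP}-\eta_\text{FP})p(x)-(1-\eta_\text{TP}-\eta_\text{FP})q(x)$, which is not the theorem and is not even normalized, so the step as written fails. The fix is local --- replace $p(x)/q(x)$ by $\min(1,p(x)/q(x))$ in the rejected-then-reverified branch --- after which your argument coincides with the paper's proof.
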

\endgroup


\begin{proof}
    We denote $\eta_\text{FP}$ and $\eta_\text{FN}$ as the false positive and false negative rate of the verifier respectively. We define $x_\text{acc} = \{ x | \frac{q(x)}{p(x)}\leq 1 \}$ and $x_\text{rej} = \{ x | \frac{q(x)}{p(x)}>1 \}$ to represent the sets of tokens generated by the draft model that should be accepted and rejected by the verifier respectively.

    In general, we consider two cases: (a) the token should be rejected while it is accepted by verifier $V$, (b) the token should be accepted but it is rejected by $V$. Let us first consider the case when the token should be rejected (i.e. $x \in x_\text{rej}$). If $V(s(x,\text{prefix})) = 0$, meaning that the verifier predicts that ${\frac{q(x)}{p(x)}} > 1$, then we reject the token $x$ sampled from $q(x)$ with probability $1-\frac{p(x)}{q(x)}$ and re-sample $x$ from an adjusted distribution $\text{norm}(\max(0, p(x) - q(x)))$. Therefore, the probability that token $x$ is accepted is:
    \begin{align}
        \frac{p(x)}{q(x)} \times (1-\eta_\text{FP}) \times q(x). \label{eq: firstcase_z1}
    \end{align} If $V(s(x,\text{prefix})) =1$, meaning that the verifier predicts that  ${\frac{q(x)}{p(x)}} \leq 1$, we accept the token, which occurs with probability:
    \begin{align}
        \eta_\text{FP} \times q(x).  \label{eq: firstcase_z0}
    \end{align}
    Therefore,  combining \eqref{eq: firstcase_z1} and \eqref{eq: firstcase_z0}, the probability of a token $x \in x_\text{rej}$ being accepted under SPRINTER is:
    \begin{align}
      (1-\eta_\text{FP}) p(x) + \eta_\text{FP} q(x). \label{eq: theorem1_side1}
    \end{align}

    Now we consider the case when the token should be accepted (i.e. $x\in x_\text{acc}$) . If $V(s(x,\text{prefix})) = 0$, with probability $1-\eta_\text{TP}$, we reject the token $x$ and call the larger model $M_p$, which will verify that indeed $\frac{q(x)}{p(x)} \leq 1$. Therefore, though the verifier made a mistake, it can be corrected by $M_p$. The probability that token $x$ is accepted is:
    \begin{align}
        q(x) \times (1-\eta_\text{TP}). \label{eq: seccase_z1}
    \end{align}
    If $V(s(x,\text{prefix})) = 1$, we accept token $x$ with probability:
    \begin{align}
        q(x) \times \eta_\text{TP}. \label{eq: sec_acc_case_z1}
    \end{align}

    While the above scenarios relied directly on the decision of the verifier given an acceptable token, there is an additional scenario where a token $x \in x_\text{acc}$ is accepted despite not being initially sampled from $M_q$. Assume that a token sampled from the draft model should be rejected (i.e. $x \in x_\text{rej}$) and the verifier accurately predicts to invoke the target model. Under this event, it would be possible for a token that is acceptable to be re-sampled from the adjusted distribution.    
    The probability of token $x \in x_\text{acc}$ being accepted under this scenario is:
    \begin{align}
         &(1-\eta_\text{FP}) \times \left( \sum_{x_{\text{rej}}} q(x_\text{rej}) (1-\frac{p(x_\text{rej})}{q(x_\text{rej})}) \cdot \frac{p(x)-q(x)}{\sum_{x_\text{acc}} p(x_\text{acc}) - q(x_\text{acc})} \right)  \nonumber\\
         & = (1-\eta_\text{FP}) \times (p(x)-q(x))\times  \frac{\sum_{x_\text{rej}} p(x_\text{rej}) -q(x_\text{rej})}{\sum_{x_\text{acc}} p(x_\text{acc}) - q(x_\text{acc})}\nonumber\\
         & = (1-\eta_\text{FP}) \times (p(x)-q(x)). \label{eq: seccase_z0}
    \end{align}
    Overall, combining \eqref{eq: seccase_z1}, \eqref{eq: sec_acc_case_z1} and \eqref{eq: seccase_z0}, the probability of a token $x \in x_\text{acc}$ being accepted under \textit{SPRINTER} is:
    \begin{align}
         &=q(x) \times (1-\eta_\text{FN}+ \eta_\text{FN}) + (1-\eta_\text{FP}) \times (p(x)-q(x)) \nonumber\\
         & = q(x) +(1-\eta_\text{FP}) \times (p(x)-q(x))  \nonumber\\
         & = (1-\eta_\text{FP}) p(x) + \eta_\text{FP} q(x). \label{eq: theorem1_side2}
    \end{align}
    Together with \eqref{eq: theorem1_side1} and \eqref{eq: theorem1_side2}, we complete the proof of Theorem \ref{the:SPRINTER}.
    
 Furthermore, the total variation distance between $p_\text{SPRINTER}$ and $p$ can be calculated as follows:
 \begin{align}
 d_\text{TV}(p,p_\textit{SPRINTER}) &= \frac{1}{2}\sum_{x} |p(x) - p_\textit{SPRINTER}(x)| \nonumber\\
 &= \frac{1}{2}\sum_{x} |p(x) - (1-\eta_\text{FP}) p(x) - \eta_\text{FP}q(x)| \nonumber\\
  &= \frac{1}{2}\sum_{x} |\eta_\text{FP}p(x) -\eta_\text{FP}q(x)|\nonumber\\
    &= \eta_\text{FP} \frac{1}{2}\sum_{x} |p(x) -q(x)|\nonumber\\
    &= \eta_\text{FP} d_\text{TV}(p,q).
 \end{align}
\end{proof}

\subsection{Proof of Theorem \ref{the:SPRINTER_expected_tokens} }
\label{sec:expec_proof}
\begingroup
\renewcommand{\thetheorem}{\ref{the:SPRINTER_expected_tokens}} 
\begin{theorem}
 The probability distribution of the number of generated tokens is given as: 
    \[
\mathbb{P}(N_\text{SPRINTER}=i) =
\begin{cases}
\eta_\text{TP}^{i} (1-\eta_\text{TP}) & i < r, \\
\eta_\text{TP}^r(\eta_\text{FP})^{i-r}(1-\eta_\text{FP}) & i\geq r.
\end{cases}
\]
The expected number of generated tokens is given as:
    \begin{align}
\label{eq:SPRINT_EXP_TOKEN}
        \mathbb{E}(N_\text{SPRINTER})=       \frac{\eta_\text{TP}-\eta_\text{TP}^r}{1-\eta_\text{TP}} + \frac{\eta_\text{TP}^r}{1-\eta_\text{FP}}.
        \end{align}
\end{theorem}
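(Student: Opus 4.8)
The plan is to set up a simple Markov-chain / geometric-trials argument that directly mirrors the Early Rejection / Delayed Rejection dichotomy described just before the theorem statement. Model the verifier as processing draft tokens one at a time. For each of the first $r$ tokens (the genuinely acceptable ones), the verifier independently accepts with probability $\eta_\text{TP}$ and rejects with probability $1-\eta_\text{TP}$; for each token after the $r$-th (the genuinely unacceptable ones), the verifier independently accepts with probability $\eta_\text{FP}$ and rejects with probability $1-\eta_\text{FP}$. The random variable $N_\text{SPRINTER}$ is the index at which the first rejection occurs, i.e. the number of tokens accepted before the first ``reject'' decision. I would first note the independence assumption explicitly, since it is what makes the event $\{N_\text{SPRINTER}=i\}$ factor into a product.

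Next I would compute $\mathbb{P}(N_\text{SPRINTER}=i)$ by cases. For $i<r$: the event is ``accept the first $i$ tokens, reject the $(i+1)$-st,'' and all $i+1$ tokens involved lie among the first $r$ acceptable ones, so the probability is $\eta_\text{TP}^{i}(1-\eta_\text{TP})$, matching the stated piecewise formula. For $i\ge r$: the event requires accepting all $r$ acceptable tokens (probability $\eta_\text{TP}^{r}$), then accepting $i-r$ of the unacceptable tokens (probability $\eta_\text{FP}^{\,i-r}$), then rejecting the next one (probability $1-\eta_\text{FP}$), giving $\eta_\text{TP}^{r}\eta_\text{FP}^{\,i-r}(1-\eta_\text{FP})$. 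A quick sanity check that these sum to $1$: $\sum_{i=0}^{r-1}\eta_\text{TP}^{i}(1-\eta_\text{TP}) = 1-\eta_\text{TP}^{r}$ by telescoping the geometric series, and $\sum_{i\ge r}\eta_\text{TP}^{r}\eta_\text{FP}^{\,i-r}(1-\eta_\text{FP}) = \eta_\text{TP}^{r}$, and these add to $1$.

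Then I would compute the expectation by splitting the sum at $i=r$:
\begin{align}
\mathbb{E}(N_\text{SPRINTER}) = \sum_{i=0}^{r-1} i\,\eta_\text{TP}^{i}(1-\eta_\text{TP}) + \sum_{i=r}^{\infty} i\,\eta_\text{TP}^{r}\eta_\text{FP}^{\,i-r}(1-\eta_\text{FP}).\nonumber
\end{align}
For the first (finite) sum I would use the standard identity for a truncated arithmetic-geometric series, or equivalently write $\sum_{i=0}^{r-1} i\,\eta_\text{TP}^{i}(1-\eta_\text{TP})$ as the mean of a geometric variable conditioned to be at most $r-1$ times the relevant probability mass. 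For the second sum I would substitute $j=i-r$ so it becomes $\eta_\text{TP}^{r}\sum_{j\ge 0}(r+j)\eta_\text{FP}^{\,j}(1-\eta_\text{FP}) = \eta_\text{TP}^{r}\bigl(r + \tfrac{\eta_\text{FP}}{1-\eta_\text{FP}}\bigr)$, using $\sum_{j\ge0}\eta_\text{FP}^{j}(1-\eta_\text{FP})=1$ and $\sum_{j\ge0} j\,\eta_\text{FP}^{j}(1-\eta_\text{FP}) = \eta_\text{FP}/(1-\eta_\text{FP})$. Combining the two pieces and simplifying should collapse the $r\,\eta_\text{TP}^{r}$ terms and leave exactly $\tfrac{\eta_\text{TP}-\eta_\text{TP}^{r}}{1-\eta_\text{TP}} + \tfrac{\eta_\text{TP}^{r}}{1-\eta_\text{FP}}$, as claimed.

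The only real obstacle is bookkeeping in the truncated first sum: it is easy to be off by one in the exponent or in whether the index runs to $r-1$ or $r$, and the cancellation of the $r\,\eta_\text{TP}^r$ contributions only works if the split point and the summand exponents are consistent with the piecewise pmf. I would therefore pin down the convention once — $N_\text{SPRINTER}=i$ means exactly $i$ accepts followed by a reject, with tokens $1,\dots,r$ governed by $\eta_\text{TP}$ — and carry it through both sums, then verify the final expression against the edge cases $r=0$ (should give $\eta_\text{FP}/(1-\eta_\text{FP})$, the pure delayed-rejection regime) and $\eta_\text{FP}=0$ (should give $\tfrac{\eta_\text{TP}-\eta_\text{TP}^r}{1-\eta_\text{TP}}+\eta_\text{TP}^r = \tfrac{1-\eta_\text{TP}^r}{1-\eta_\text{TP}}$, the expected run length capped at $r$). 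Everything else is a routine geometric-series computation.
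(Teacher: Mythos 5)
Your proposal is correct and follows essentially the same route as the paper: the same independent-trials model yielding the piecewise pmf, the same split of the expectation at $i=r$, and the same geometric / arithmetic-geometric series evaluations (your direct reindexing $j=i-r$ of the tail sum is marginally cleaner than the paper's infinite-minus-finite-partial-sum computation, but it produces the identical intermediate value $\eta_\text{TP}^r\bigl(r+\tfrac{\eta_\text{FP}}{1-\eta_\text{FP}}\bigr)$ and the same cancellation of the $r\,\eta_\text{TP}^r$ terms). One minor slip in your sanity check only: at $\eta_\text{FP}=0$ the formula evaluates to $\tfrac{\eta_\text{TP}-\eta_\text{TP}^{r+1}}{1-\eta_\text{TP}}=\sum_{k=1}^{r}\eta_\text{TP}^{k}$ rather than $\tfrac{1-\eta_\text{TP}^{r}}{1-\eta_\text{TP}}$, which does not affect the proof itself.
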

\endgroup


\begin{proof}  We assume that the first $r$ tokens are acceptable, while subsequent tokens are unacceptable. This can be modeled as a finite geometric series for the accepted tokens and an infinite geometric series for the rejected ones. The expected number of generated tokens is derived as follows:
 \begin{align}
     \mathbb{E}[N_\textit{SPRINTER}] &=  \sum_{k=0}^\infty kP(N_\textit{SPRINTER}=k)\nonumber\\
     &= \underbrace{\sum_{k=0}^{r-1}kP(N_\textit{SPRINTER}=k)}_{\text{Term 1}} +\underbrace{\sum_{k=r}^{\infty}kP(N_\text{SPRINTER}=k)}_\text{Term 2}     
 \end{align}
 We first expand Term 1 as follows:
    \begin{align}     
          \sum_{k=0}^{r-1}kP(N_\textit{SPRINTER}=k) &= 0\times(1-\eta_\text{TP}) + 1\times(1-\eta_\text{TP})\eta_\text{TP} + 2\times(1-\eta_\text{TP})\eta_\text{TP}^2 + \ldots + (r-1)\times (1-\eta_\text{TP})\eta_\text{TP}^{r-1}\nonumber\\
         &\stackrel{(a)}= (1-\eta_\text{TP})\sum_{k=0}^{r-1}k\eta_\text{TP}^k\nonumber
    \end{align}
    Note that (a) represents a finite geometric series, allowing us to apply the sum formula for such a series, resulting in:
    \begin{align}
        (1-\eta_\text{TP})\sum_{k=0}^{r-1}k\eta_\text{TP}^k& =  (1-\eta_\text{TP})\eta_\text{TP}\frac{1-r\eta_\text{TP}^{r-1}+(r-1)\eta_\text{TP}^{r}}{(1-\eta_\text{TP})^2} \nonumber\\
         & = \frac{\eta_\text{TP}-\eta_\text{TP}^{r}}{1-\eta_\text{TP}} -(r-1)\eta_\text{TP}^{r}
         \label{eq: theorem2_side1}
    \end{align}
         
Similarly, we simplify Term 2 as follows:
    \begin{align}  
  \sum_{k=r}^{\infty}kP(N_\textit{SPRINTER}=k)
          &\stackrel{(a)}=  r\times\eta_\text{TP}^r(1-\eta_\text{FP}) + (r+1)\times\eta_\text{TP}^r \eta_\text{FP}(1-\eta_\text{FP})+ (r+2)\times\eta_\text{TP}^r \eta_\text{FP}^2(1-\eta_\text{FP}) + \ldots  \nonumber\\
         &\stackrel{(b)}= ar + a(r+1)\eta_\text{FP} +  a(r+2)\eta_\text{FP}^2 +\ldots \nonumber\\
         &= a\sum_{k=r}^{\infty} k\eta_\text{FP}^{k-r} \nonumber\\
         &= \frac{a}{\eta_\text{FP}^r}\sum_{k=r}^{\infty} k\eta_\text{FP}^{k} \nonumber\\
         &\stackrel{(c)}= \frac{a}{\eta_\text{FP}^r}\sum_{k=0}^{\infty}k\eta_\text{FP}^k - \frac{a}{\eta_\text{FP}^r} \sum_{k=0}^{r-1}k\eta_\text{FP}^k \label{eq: theorem2_side2_intermediate}
    \end{align}
    where (a) the sum starts from the $r{th}$ token (i.e. the first unacceptable token), (b) follows from setting $a = \eta_\text{TP}^r(1-\eta_\text{FP})$. We also observe that (c) is a combination of two geometric series. Therefore, the first term in \eqref{eq: theorem2_side2_intermediate} can be simplified as:
    \begin{align}
        \frac{a}{\eta_\text{FP}^r}\sum_{k=0}^{\infty}k\eta_\text{FP}^k &= \frac{a}{\eta_\text{FP}^r}\times \frac{\eta_\text{FP}}{(1-\eta_\text{FP})^2} \nonumber\\
        & = \frac{\eta_\text{TP}^r}{\eta_\text{FP}^r}\times \frac{\eta_\text{FP}}{1-\eta_\text{FP}} \label{eq: theorem2_side2_intermediate_1} 
    \end{align}
    Similarly, we simplify the second term in \eqref{eq: theorem2_side2_intermediate} as:
    \begin{align}
        \frac{a}{\eta_\text{FP}^r} \sum_{k=0}^{r-1}k\eta_\text{FP}^k = \frac{\eta_\text{TP}^r}{\eta_\text{FP}^r} \times\left(\frac{\eta_\text{FP}-\eta_\text{FP}^{r}}{1-\eta_\text{FP}} -(r-1)\eta_\text{FP}^{r}\right) \label{eq: eq: theorem2_side2_intermediate_2}
    \end{align}
    Plugging \eqref{eq: theorem2_side2_intermediate_1} and \eqref{eq: eq: theorem2_side2_intermediate_2} into \eqref{eq: theorem2_side2_intermediate}, we have:
    \begin{align}
        \frac{a}{\eta_\text{FP}^r}\sum_{k=0}^{\infty}k\eta_\text{FP}^k - \frac{a}{\eta_\text{FP}^r} \sum_{k=0}^{r-1}k\eta_\text{FP}^k &= \frac{\eta_\text{TP}^r}{\eta_\text{FP}^r}\left(\frac{\eta_\text{FP}}{1-\eta_\text{FP}} - \frac{\eta_\text{FP}-\eta_\text{FP}^{r}}{1-\eta_\text{FP}} + (r-1)\eta_\text{FP}^{r}\right) \nonumber\\
        &= \frac{\eta_\text{TP}^r}{\eta_\text{FP}^r}\left( \frac{\eta_\text{FP}^{r}}{1-\eta_\text{FP}} + (r-1)\eta_\text{FP}^{r}\right)\nonumber\\
        &= \eta_\text{TP}^r \left( \frac{1}{1-\eta_\text{FP}} + (r-1)\right) \label{eq: theorem2_side2}
    \end{align}
    %
    
    Combining \eqref{eq: theorem2_side1} and \eqref{eq: theorem2_side2}, we obtain:
    \begin{align}
         \mathbb{E}(N_\text{SPRINTER}) &= \frac{\eta_\text{TP}-\eta_\text{TP}^{r}}{1-\eta_\text{TP}} -(r-1)\eta_\text{TP}^{r} +  \eta_\text{TP}^r \left( \frac{1}{1-\eta_\text{FP}} + (r-1)\right) \nonumber\\
         &= \frac{\eta_\text{TP}-\eta_\text{TP}^r}{1-\eta_\text{TP}} + \frac{\eta_\text{TP}^r}{1-\eta_\text{FP}}.
    \end{align}
    The above expression gives Theorem \ref{the:SPRINTER_expected_tokens}.
\end{proof}
    \subsection{Proof of Theorem \ref{the:SPRINTER_expected_latency} }
\begingroup
\renewcommand{\thetheorem}{\ref{the:SPRINTER_expected_latency}} 
\label{sec:lat_proof}
\begin{theorem}
 The expected stopping time is given as:
    \begin{align}
        \label{eq:SPRINT_EXP_LATENCY}
        & \mathbb{E}[T_\text{Stop}] =  \frac{(1-\eta_{\text{TP}}^r) t_d}{1-\eta_\text{TP}} + \frac{\eta_\text{TP}^rt_d}{1-\eta_\text{FP}}.
    \end{align} 
\end{theorem}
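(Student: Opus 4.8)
The plan is to compute $\mathbb{E}[T_\text{Stop}]$ by conditioning on the value of $N_\text{SPRINTER}$, whose distribution is already pinned down in Theorem \ref{the:SPRINTER_expected_tokens}. The key observation is that when the verifier accepts $N_\text{SPRINTER} = i$ consecutive tokens before its first rejection, \textit{SPRINTER} has paid exactly $i$ draft-model inferences and $i$ verifier inferences to reach that stopping point, so the time accumulated up to the stop is $i(t_d + t_v)$. (If one prefers to count only the draft time, the stopping time is $i\,t_d$; either way the dependence on $i$ is linear, which is all that matters.) Hence $\mathbb{E}[T_\text{Stop}] = (t_d + t_v)\,\mathbb{E}[N_\text{SPRINTER}]$, and since the statement in Theorem \ref{the:SPRINTER_expected_latency} is written purely in terms of $t_d$, I would adopt the convention that the stopping time being measured is $N_\text{SPRINTER}\,t_d$, so that $\mathbb{E}[T_\text{Stop}] = t_d\,\mathbb{E}[N_\text{SPRINTER}]$.

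The main body of the argument is then just to substitute the closed form for $\mathbb{E}[N_\text{SPRINTER}]$ from equation \eqref{eq:SPRINT_EXP_TOKEN}. First I would recall
\[
\mathbb{E}(N_\text{SPRINTER}) = \frac{\eta_\text{TP}-\eta_\text{TP}^r}{1-\eta_\text{TP}} + \frac{\eta_\text{TP}^r}{1-\eta_\text{FP}},
\]
and note that $\eta_\text{TP} - \eta_\text{TP}^r = \eta_\text{TP}(1 - \eta_\text{TP}^{r-1})$, but more usefully for matching the target expression I would rewrite the first summand. Observe that $\frac{\eta_\text{TP}-\eta_\text{TP}^r}{1-\eta_\text{TP}}$ and $\frac{1-\eta_\text{TP}^r}{1-\eta_\text{TP}}$ differ by $\frac{1-\eta_\text{TP}}{1-\eta_\text{TP}} = 1$; i.e.\ $\frac{1-\eta_\text{TP}^r}{1-\eta_\text{TP}} = \frac{\eta_\text{TP}-\eta_\text{TP}^r}{1-\eta_\text{TP}} + 1$. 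Depending on whether the intended accounting includes or excludes the draft-model call that immediately precedes the rejection, one of these two forms appears; since the statement writes the first term as $\frac{(1-\eta_\text{TP}^r)t_d}{1-\eta_\text{TP}}$, the convention being used counts the stopping time as $(N_\text{SPRINTER}+1)$ draft inferences (the $N_\text{SPRINTER}$ accepted ones plus the token on which the verifier finally rejects), so that $\mathbb{E}[T_\text{Stop}] = t_d(\mathbb{E}[N_\text{SPRINTER}] + 1) - $ (a correction), and this collapses to exactly $\frac{(1-\eta_\text{TP}^r)t_d}{1-\eta_\text{TP}} + \frac{\eta_\text{TP}^r t_d}{1-\eta_\text{FP}}$ once the algebraic identity above is applied to the first term. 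I would write this out as a short chain of equalities.

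The only genuine subtlety — and hence the step I would be most careful about — is fixing the bookkeeping convention for what ``stopping time'' counts: whether $t_v$ contributions are folded in (and, if $t_v \le t_d$, whether one upper-bounds by replacing $t_v$ with $t_d$ to get the clean expression), and whether the draft inference on the finally-rejected token is charged to $T_\text{Stop}$. These choices are what turn $\frac{\eta_\text{TP}-\eta_\text{TP}^r}{1-\eta_\text{TP}}$ into $\frac{1-\eta_\text{TP}^r}{1-\eta_\text{TP}}$ in the first term. Everything downstream of that is the same finite-plus-infinite geometric-series manipulation already carried out in the proof of Theorem \ref{the:SPRINTER_expected_tokens}, so I would either invoke that theorem as a black box and multiply through by $t_d$, or, if a self-contained derivation is wanted, replicate the two-term ``Term 1 / Term 2'' split from that proof verbatim with each probability weight multiplied by the linear-in-$i$ time cost. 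I expect the proof to be under half a page.
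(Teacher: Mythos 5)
Your proposal is correct, and your primary route is cleaner than the paper's. The paper proves the theorem by writing $\mathbb{E}[T_\text{Stop}] = \sum_{k\ge 0} t_d(k+1)\,\mathbb{P}(T_\text{Stop}=k)$ and then redoing from scratch the same finite-plus-infinite geometric-series computation as in Theorem \ref{the:SPRINTER_expected_tokens} (its ``Term 1$'$ / Term 2$'$'' split), explicitly neglecting $t_v$; you instead observe that under this bookkeeping $T_\text{Stop} = (N_\text{SPRINTER}+1)\,t_d$, so linearity of expectation plus the already-proved closed form for $\mathbb{E}[N_\text{SPRINTER}]$ and the identity $\frac{1-\eta_\text{TP}^r}{1-\eta_\text{TP}} = \frac{\eta_\text{TP}-\eta_\text{TP}^r}{1-\eta_\text{TP}} + 1$ finish the proof in two lines. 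You correctly isolated the only genuine content of the theorem, namely the convention that the draft-model call on the finally rejected token is charged to $T_\text{Stop}$ and that $t_v$ is dropped --- this is exactly what the paper assumes. One small cleanup: your hedge ``$\mathbb{E}[T_\text{Stop}] = t_d(\mathbb{E}[N_\text{SPRINTER}]+1) - \text{(a correction)}$'' is unnecessary; no correction term is needed, since $t_d\bigl(\mathbb{E}[N_\text{SPRINTER}]+1\bigr)$ equals the stated right-hand side exactly. Your fallback of replicating the two-term series split is literally the paper's proof, so either way the argument goes through.
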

\endgroup

\begin{proof}
    Similar as above Theorem, we assume that the first $r$ tokens are acceptable, while all subsequent tokens are not. We also note that the verifier's runtime is negligible compared to that of the draft LLM \( M_q \). Therefore, we consider only \( t_d \) for the expected stopping time. We first reduce the expected stopping time as follows:
     \begin{align}
     \mathbb{E}[T_\text{Stop}] &=  \sum_{k=0}^\infty t_d(k+1)P(T_\text{Stop}=k)\nonumber\\
     &= \underbrace{\sum_{k=0}^{r-1}t_d(k+1)P(T_\text{Stop}=k)}_{\text{Term 1'}} +\underbrace{\sum_{k=r}^{\infty}t_d(k+1)P(T_\text{Stop}=k)}_\text{Term 2'}     
 \end{align}

    We now write out Term 1', representing the case that we stop before the $r$th acceptable token, as follows:
    \begin{align}
    \sum_{k=0}^{r-1} t_d(k+1)P(T_\text{Stop}=k) &= t_d\times(1-\eta_\text{TP}) + 2t_d\times(1-\eta_\text{TP})\eta_\text{TP} +  \ldots + rt_d\times (1-\eta_\text{TP})\eta_\text{TP}^{r-1}\nonumber\\
        &=\sum_{k=0}^{r-1}(1-\eta_\text{TP})\eta_\text{TP}^{k}\times(k+1)t_d \nonumber\\
        & =(1-\eta_\text{TP})t_d\sum_{k=0}^{r-1}k\eta_\text{TP}^{k} + (1-\eta_\text{TP})t_d\sum_{k=0}^{r-1}\eta_\text{TP}^{k}, \label{eq:T_stop_term1}
    \end{align}
    where \eqref{eq:T_stop_term1} is the combination of two geometric series. Therefore, \eqref{eq:T_stop_term1} can be simplified as follows:
    \begin{align}
        &(1-\eta_\text{TP})t_d\sum_{k=0}^{r-1}k\eta_\text{TP}^{k} + (1-\eta_\text{TP})t_d\sum_{k=0}^{r-1}\eta_\text{TP}^{k} \nonumber\\
        &= t_d \left(\frac{\eta_\text{TP}-\eta_\text{TP}^{r}}{1-\eta_\text{TP}} -(r-1)\eta_\text{TP}^{r}\right) + t_d(1-\eta_\text{TP}^r) \nonumber\\
        &=t_d\times\left(\frac{\eta_\text{TP}-\eta_\text{TP}^{r}}{1-\eta_\text{TP}}-r\eta_\text{TP}^{r}+1\right) \label{eq: theorem3_side1}
    \end{align}
    We next expand Term 2', which represents the case that we stop at or after the $r$th acceptable token, as follows:
    \begin{align}
        \sum_{k=r}^{\infty}t_d(k+1)P(T_\text{Stop}=k)&=(r+1)t_d\times\eta_\text{TP}^r(1-\eta_\text{FP}) + (r+2)t_d\times\eta_\text{TP}^r \eta_\text{FP}(1-\eta_\text{FP}) + \ldots  \nonumber\\
        &\stackrel{(a)}= b(r+1) + b(r+2)\eta_\text{FP} +  \ldots \nonumber\\
        &= b\sum_{k=r}^{\infty} k\eta_\text{FP}^{k-r} + b\sum_{k=r}^{\infty} \eta_\text{FP}^{k-r}\nonumber\\
         &= \frac{b}{\eta_\text{FP}^r}\sum_{k=r}^{\infty} k\eta_\text{FP}^{k}+ \frac{b}{\eta_\text{FP}^r}\sum_{k=r}^{\infty} \eta_\text{FP}^{k} \label{eq: theorem3_side2_intermediate}
    \end{align}
    where (a) follows from setting $b = \eta_\text{TP}^r(1-\eta_\text{FP})t_d$.
    Note that the first term in \eqref{eq: theorem3_side2_intermediate} can be written as:
    \begin{align}
        \frac{b}{\eta_\text{FP}^r}\sum_{k=r}^{\infty}k\eta_\text{FP}^k &= \frac{b}{\eta_\text{FP}^r}\left(\sum_{k=0}^{\infty}k\eta_\text{FP}^k -\sum_{k=0}^{r-1}k\eta_\text{FP}^k \right) \nonumber\\
        & = \frac{\eta_\text{TP}^r}{\eta_\text{FP}^r} t_d\times\left(\frac{\eta_\text{FP}}{1-\eta_\text{FP}}-\frac{\eta_\text{FP}-\eta_\text{FP}^{r}}{1-\eta_\text{FP}} +(r-1)\eta_\text{FP}^{r}\right) \label{eq: theorem3_side2_intermediate_1}
    \end{align}
    Similarly, the second term in \eqref{eq: theorem3_side2_intermediate} can be expressed as:
    \begin{align}
        \frac{b}{\eta_\text{FP}^r}\sum_{k=r}^{\infty} \eta_\text{FP}^{k} &= \frac{b}{\eta_\text{FP}^r}\left(\sum_{k=0}^{\infty}\eta_\text{FP}^k -\sum_{k=0}^{r-1}\eta_\text{FP}^k\right)  \nonumber\\
        &=\frac{\eta_\text{TP}^r}{\eta_\text{FP}^r} t_d \left(1- (1-\eta_\text{FP}^r)\right) \label{eq: theorem3_side2_intermediate_2}
    \end{align}
    Plugging \eqref{eq: theorem3_side2_intermediate_1} and \eqref{eq: theorem3_side2_intermediate_2} into \eqref{eq: theorem3_side2_intermediate}, we obtain the expected stopping time after or at $r$th acceptable token is:
    \begin{align}
        \frac{\eta_\text{TP}^r}{1-\eta_\text{FP}}t_d+rt_d\eta_\text{TP}^r \label{theorem3_side2}
    \end{align}
    Combining \eqref{eq: theorem3_side1} and \eqref{theorem3_side2}, the final expression of the expected stopping time is given as:
    \begin{align}
        \mathbb{E}[T_\text{Stop}] =  \frac{(1-\eta_{\text{TP}}^r) t_d}{1-\eta_\text{TP}} + \frac{\eta_\text{TP}^rt_d}{1-\eta_\text{FP}}.
    \end{align}
    This completes the proof.
\end{proof}
\subsection{Additional theoretical results}
\label{sec:Acc_rate}
In this section, building on previous work by \cite{leviathan2023fast}, we examine the acceptance rate of a single run of SPRINTER. This analysis provides insight into how the verifiers' statistical properties influence the acceptance rate.
\textbf{Calculation of }$\alpha_{\textit{SPRINTER}}$ \textbf{and} $\beta_{\textit{SPRINTER}}$. It is well known \cite{leviathan2023fast} that the acceptance rate of standard speculative decoding $\beta$ is given as:
\begin{align}
    \beta = 1-d_{TV}(p,q),
\end{align}
where $d_{TV}(p,q)$ is the total variation distance  between the distributions $p$ and $q$. In the next Theorem, we derive an analogous result showing the acceptance rate of tokens generated by $M_q$ when using \textit{SPRINTER}.
\begin{theorem}\label{the: acceptance_rate}
    The acceptance rate of \textit{SPRINTER} $\beta_{\textit{SPRINTER}}$ is 
    \begin{align}
        1-(1-\eta_\text{FP})\cdot d_{TV}(p,q).
    \end{align}
\end{theorem}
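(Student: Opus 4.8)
The plan is to compute the acceptance rate of a single draft token generated by $M_q$ under \textit{SPRINTER}, where "acceptance" means the draft token is kept (not replaced by a resample from $M_p$). First I would note that a draft token $x\sim q$ is ultimately kept in exactly two disjoint situations: (i) the verifier predicts "accept" (so $M_p$ is never invoked for that token), or (ii) the verifier predicts "reject", $M_p$ is invoked, and the standard speculative-decoding coin flip keeps the token anyway, which happens with probability $\min(1, p(x)/q(x))$. This partition is the same case split used in the proof of Theorem \ref{the:SPRINTER}, so I would reuse that bookkeeping.

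Next I would condition on whether the sampled token lies in $x_\text{acc}=\{x: q(x)/p(x)\le 1\}$ or $x_\text{rej}=\{x: q(x)/p(x)>1\}$. For $x\in x_\text{acc}$: with probability $\eta_\text{TP}$ the verifier accepts and the token is kept; with probability $1-\eta_\text{TP}$ the verifier rejects, $M_p$ is called, and since $p(x)\ge q(x)$ the token is kept with probability $1$. Hence every $x\in x_\text{acc}$ is kept with probability $1$, contributing $\sum_{x\in x_\text{acc}} q(x)$. For $x\in x_\text{rej}$: with probability $\eta_\text{FP}$ (false positive) the verifier accepts and the token is kept; with probability $1-\eta_\text{FP}$ the verifier rejects, $M_p$ is called, and the token is kept only with probability $p(x)/q(x)$. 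So $x\in x_\text{rej}$ is kept with probability $\eta_\text{FP} + (1-\eta_\text{FP})\frac{p(x)}{q(x)}$, contributing $\sum_{x\in x_\text{rej}} q(x)\bigl(\eta_\text{FP} + (1-\eta_\text{FP})\tfrac{p(x)}{q(x)}\bigr) = \eta_\text{FP}\sum_{x\in x_\text{rej}} q(x) + (1-\eta_\text{FP})\sum_{x\in x_\text{rej}} p(x)$.

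Summing the two contributions gives
\[
\beta_{\textit{SPRINTER}} = \sum_{x\in x_\text{acc}} q(x) + \eta_\text{FP}\sum_{x\in x_\text{rej}} q(x) + (1-\eta_\text{FP})\sum_{x\in x_\text{rej}} p(x).
\]
Writing $\sum_{x\in x_\text{acc}} q(x) = 1 - \sum_{x\in x_\text{rej}} q(x)$, the terms combine to $1 - (1-\eta_\text{FP})\bigl(\sum_{x\in x_\text{rej}} q(x) - \sum_{x\in x_\text{rej}} p(x)\bigr)$. Finally I would invoke the standard identity $d_{TV}(p,q) = \sum_{x: q(x)>p(x)} \bigl(q(x)-p(x)\bigr) = \sum_{x\in x_\text{rej}}\bigl(q(x)-p(x)\bigr)$ (noting $x_\text{rej}$ is exactly the set where $q>p$), which yields $\beta_{\textit{SPRINTER}} = 1 - (1-\eta_\text{FP})\, d_{TV}(p,q)$, as claimed. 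A sanity check: $\eta_\text{FP}=0$ recovers the known speculative-decoding rate $1-d_{TV}(p,q)$, and $\eta_\text{FP}=1$ gives acceptance rate $1$, consistent with never calling $M_p$.

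The main obstacle is handling the boundary/tie case correctly — tokens with $q(x)=p(x)$ must be assigned consistently to $x_\text{acc}$ (the definition uses $q/p\le 1$), and one must verify that the resample branch contributes exactly $\min(1,p(x)/q(x))$ rather than $p(x)/q(x)$ unconditionally, so that the $x_\text{acc}$ tokens reached via a verifier false negative are not inadvertently discarded. Once the case split mirrors the Theorem \ref{the:SPRINTER} argument, the algebra is routine.
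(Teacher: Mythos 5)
Your proposal is correct and takes essentially the same route as the paper: both condition on whether the draft token is acceptable or not, assign keep-probability $1$ to $x\in x_\text{acc}$ and $\eta_\text{FP}+(1-\eta_\text{FP})\tfrac{p(x)}{q(x)}$ to $x\in x_\text{rej}$, and then reduce to a total-variation identity. The only cosmetic difference is that the paper writes the final step as $\sum_x \min\bigl(q(x),\,\eta_\text{FP}q(x)+(1-\eta_\text{FP})p(x)\bigr)=(1-\eta_\text{FP})\sum_x\min(p(x),q(x))+\eta_\text{FP}$, whereas you split the sum over $x_\text{acc}$ and $x_\text{rej}$ and invoke $d_{TV}(p,q)=\sum_{x:q(x)>p(x)}(q(x)-p(x))$ --- the same identity in a different form.
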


\begin{proof}
From the \textit{SPRINTER} sampling procedure, we know that $\beta_{\textit{SPRINTER}}$ satisfies the following:
\begin{align}
    \beta_{\textit{SPRINTER}} = \mathbb{E}_{x\sim q(x)}
    \begin{cases} 
1 & \text{if } q(x) \leq p(x) \\
\eta_\text{FP}+\frac{p(x)}{q(x)} (1-\eta_\text{FP})& \text{if } q(x) > p(x).
\end{cases} 
\end{align}
The above expectation can be simplified and computed explicitly as follows:
\begin{align}
   \beta_{\textit{SPRINTER}} & = \sum_{x} \min(q(x), \eta_\text{FP} q(x) + (1-\eta_\text{FP})p(x))\nonumber\\
    &=\sum_{x} \min(\eta_\text{FP} q(x)+(1-\eta_\text{FP})q(x), \eta_\text{FP} q(x) + (1-\eta_\text{FP})p(x))\nonumber\\
    &=\sum_{x} (1-\eta_\text{FP})\min(q(x), p(x))+ \eta_\text{FP} q(x)\nonumber\\
    & = (1-\eta_\text{FP}) \cdot(1-d_{TV}(p,q)) + \eta_\text{FP}\nonumber\\
    &= 1-(1-\eta_\text{FP})\cdot d_{TV}(p,q).
\end{align}
\end{proof}
\begin{remark}
    Note that when $\eta_\text{FP} = 0$, the acceptance rate is the same as that of speculative decoding.
\end{remark}
\begin{remark}
    When $\eta_\text{FP} \neq 0$,  $\beta_{\textit{SPRINTER}} \geq \beta$ meaning that \textit{SPRINTER} has a higher acceptance rate compared to that of standard speculative decoding. The inherent reason is that the classifier is susceptible to predicting that the token is acceptable when it actually should be rejected, causing a higher acceptance rate and subsequently enabling the draft model to generate more tokens. However, $\eta_\text{FP}$ also reflects the distance of the probability distribution of \textit{SPRINTER} from the probability distribution of the target model $p$. This means that while more tokens are being accepted, the distribution they are sampled from may deviate from the distribution of $M_p$, causing poorer quality tokens to be generated. Therefore, $\eta_\text{FP}$ measures the tradeoff between the accuracy of the tokens generated and the latency incurred from using \textit{SPRINTER}.
\end{remark}

\subsection{Full SPRINTER Algorithm}
\label{sec:algorithm_hyper}
Algorithm \ref{alg:SPRINTER} provides the detailed overview of Algorithm stated in the main paper, detailing the procedure of a full step of SPRINTER. 
 \begin{algorithm}[h]
   \caption{SPRINTER (detailed)}
   \label{alg:SPRINTER}
\begin{algorithmic}
   \STATE {\bfseries Input:} $M_p, M_q, V, \text{Prefix}, \text{Prediction Threshold}~\tau$
   \STATE $x = \emptyset$
   \STATE \textcolor{blue}{Sample tokens from $M_q$ while verifier predicts $\frac{q(x)}{p(x)}\leq 1$ }
   \STATE $r = 1$
   \WHILE{True}
      \STATE Prefix = Prefix + $x$
      \STATE $q_r(x) \sim M_q(\text{Prefix})$
    \STATE $x \sim q_r(x)$
    \IF{$V (s(x, \text{Prefix})) \geq \tau$}
    \STATE $r += 1$
    \ELSE
    \STATE Break
    \ENDIF
    \ENDWHILE
   \STATE \textcolor{blue}{Run $M_p$ Once}.
   \STATE $p_{r+1}(x) \sim M_p(\text{Prefix}+x_{r})$
   \STATE $p'(x) \leftarrow p_{r+1}(x)$
   \STATE $c \sim U(0,1)$
   \STATE $n = \min(1, \frac{p_{r+1}(x)}{q_{r+1}(x)})$
   \IF{$c > n$}
   \STATE $p'(x) \sim \text{norm}(\max(0, p_{r+1}(x)- q_{r+1}(x))$
   \ENDIF
  \STATE $t \sim p'(x)$
  \STATE \textbf{Return} \text{Prefix} + $t$ 
\end{algorithmic}
\end{algorithm}

\textbf{More Training Details Regarding the Verifier and Hyperparameter Tuning}
During the training process of the verifier, we vary the value of \(\lambda\) among 1, 1.2, and 1.5 to generate the ground truth regarding whether a token should be accepted or rejected. By increasing $\lambda$, we bias the verifier to accept more draft tokens, potentially increasing $\eta_\text{FP}$. Experimental results show that SPRINTER with \(\lambda = 1.2\) performs best on the LM1B dataset for both GPT-Neo and GPT2 model pairs. On the Wiki-Summary dataset, SPRINTER with \(\lambda = 1\)  and \(\lambda = 1.2\) achieves superior results using the GPT2 and GPT-Neo model pairs respectively.  
For the verifier's prediction threshold \(\tau\), we observe that \(\tau = 0.5\) is sufficient to achieve strong performance. 

We split the dataset into train/test sets, using early stopping to mitigate overfitting. The input dimension of the verifier depends on the dimension of the assumed draft model. As GPT-Neo-125M has an output dimension of $768$, for example, the verifier used would also have a dimension of 768 neurons.

\subsection{Detailed Discussion of Related Works}
\label{sec:relatedworks_append}

Multiple approaches for optimizing SD have been presented in the literature. SpecTr \cite{sun2024spectr} uses optimal transport to effectively manage the draft selection process, ensuring efficient and accurate sampling from large models. Yin et al. \cite{yin2024theoretical} frame the decoding problem through a Markov chain abstraction and analyzing its key properties—output quality and inference acceleration—from a theoretical perspective. Recently, Judge Decoding \cite{judge} uses a trained linear head to judge token validity/quality beyond mere alignment, significantly speeding up inference while maintaining output quality. Under this method, tokens that may not necessarily match the target distribution may still be accepted as they are still of high-quality. Similarly, \cite{jang2024lanternacceleratingvisualautoregressive} presents a variant of speculative decoding for visual autoregressive models by relaxing the constraint that tokens must match the target distribution while proposing a mechanism based on total variation distance to ensure that the distribution drift between the generated tokens and the target model does not exceed a certain threshold. 

There are also works that have similarly used an additional classifier for determining how many tokens the draft model should generate. \cite{mamou2024dynamicspeculationlookaheadaccelerates} proposed using a two layer feedforward network to predict when the draft model should stop generating tokens and initiate the target model's verification procedure. \cite{huang2024specdec++} models the SD procedure as a Markov Decision Process and uses the draft model with an added head to predict if the draft model should stop generating tokens. However, both methods ensure that sampling the draft model with their method is equivalent to sampling the target model, which hinders improved latency reductions that can be attained if this requirement is relaxed.  \cite{agrawal2024adaedl} proposes prove that a function of the entropy of the distribution of the draft model can be used as a means of indicating when to end a round of drafting tokens. Specifically, the output of this function is compared against a threshold, which is also adjusted dynamically based on the current acceptance rate, to determine if the current drafting round should end. \cite{kim2024speculative} provides two heuristics for determining when the target model should take control in generating tokens. The first involves observing the draft model's distribution to see if it has a certain lack of confidence in its current token, which would indicate that the target model should generate a replacement token. The second takes place during verification, when the distributions of the target model and draft model are compared to observe an instant in which the draft model is too confident in its decision. This indicates that the sequence should revert back to this point, with the target model generating a replacement token. A method for fine-tuning the draft model to generate tokens that better align with the target model is also presented.

\cite{lu2024improving} investigated using different classifier architectures for halting the drafting process for multi-candidate speculative decoding. MEDUSA \cite{cai2024medusa} propose a framework to accelerate inference in large language models (LLMs) by employing multiple decoding heads. By introducing parallelism in the decoding process, MEDUSA aims to improve the efficiency of LLM operations significantly.
EAGLE \cite{li2024eagle} presents a speculative sampling framework that improves large language model (LLM) inference by predicting second-to-top-layer features based on a token sequence advanced by one time step.

\subsection{Additional Examples of Responses generated by Sprinter and SD}
\label{sec:prompts_ex}
In this section, we feed additional prefixes to \textit{SPRINTER} and SD and show their respective generated responses in Figure \ref{fig:Prompt_example_appendix}. We observe that \textit{SPRINTER} can achieve a relatively good performance compared with SD, but can sometimes generate incorrect information. 
\begin{figure}[htbp]
    \centering
    \begin{minipage}[t]{0.48\linewidth}
        \centering
        \includegraphics[width=\linewidth]{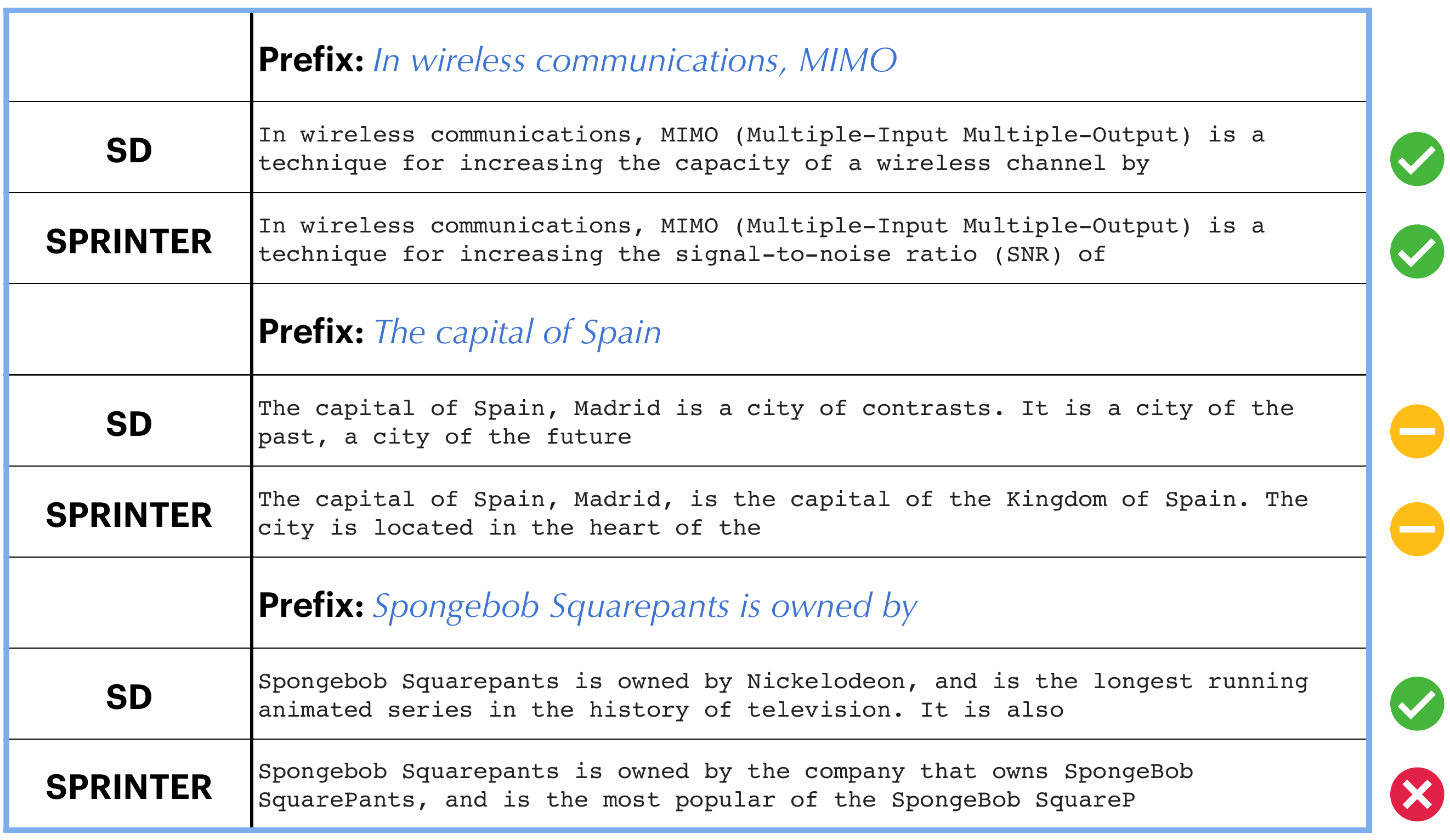}
    \end{minipage}%
    \hfill
    \begin{minipage}[t]{0.48\linewidth}
        \centering
        \includegraphics[width=\linewidth]{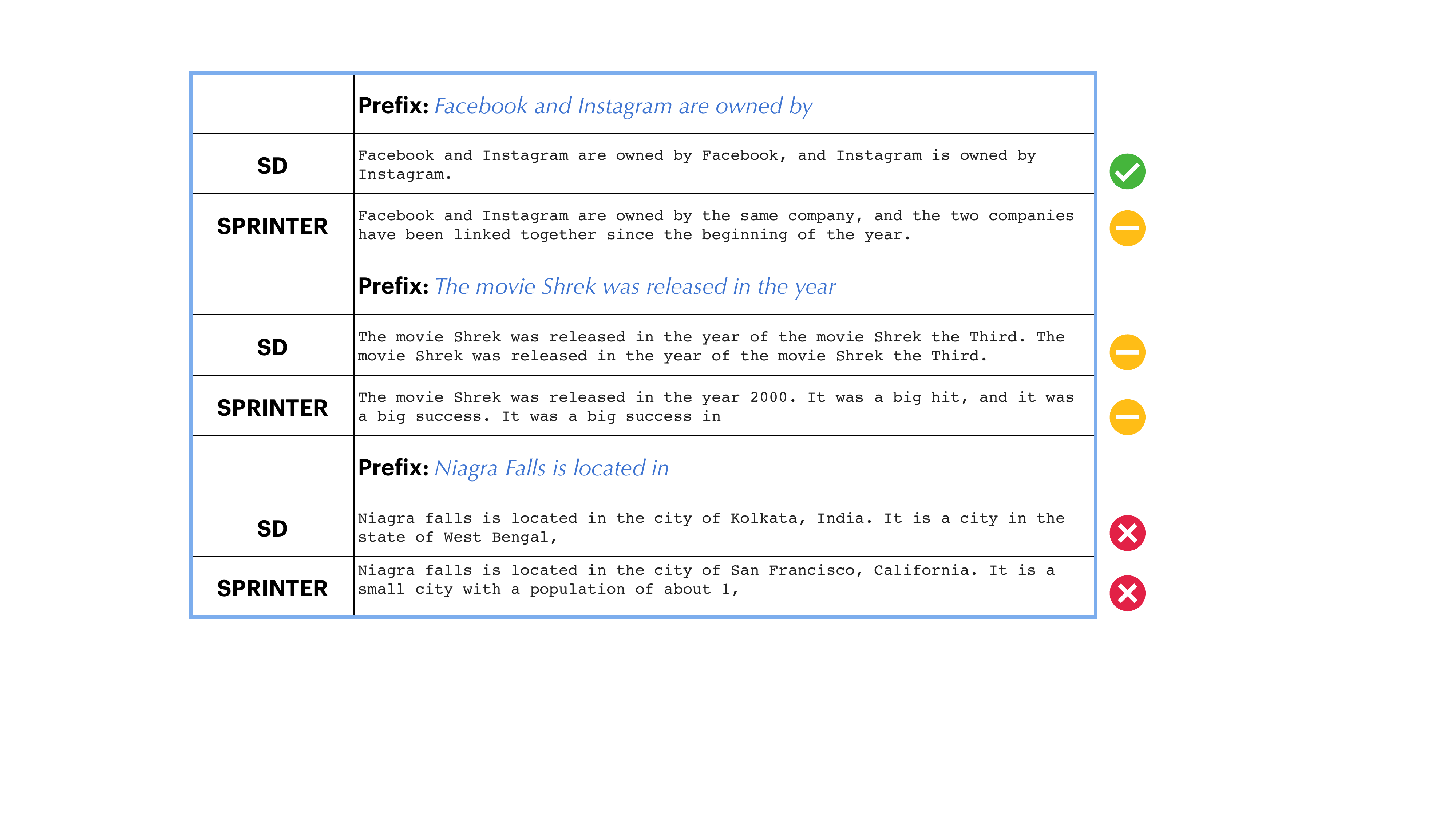}
    \end{minipage}
    \caption{Comparison of responses generated by \textit{SPRINTER} and SD under the same prefixes.}
    \label{fig:Prompt_example_appendix}
\end{figure}

\subsection{Flops Calculation}
\label{sec:flops_append}
We adopt the methods used in \cite{hoffmann2022trainingcomputeoptimallargelanguage,casson2023transformerflops} to determine the number of floating point operations (FLOPS) performed in a forward pass of the draft and target models used in this work. The main sources of FLOPS considered in \cite{hoffmann2022trainingcomputeoptimallargelanguage,casson2023transformerflops} are due to the embedding matrices, self-attention operations in the transformer blocks of the LLM, the feedforward networks in each transfomer block, and the operations required to generate the final logits. Table \ref{tab:Flops} presents the number of FLOPS needed for each draft and target model to generate 20 tokens. The table shows that GPT-Neo-1.3B and GPT2-XL require roughly 8 and 10 times the number of FLOPS compared to GPT-Neo-125M and GPT2-Small respectively. 


\begin{figure}[ht]
    \centering
    \begin{minipage}{0.48\textwidth}
        \centering
        \includegraphics[width=0.7\textwidth]{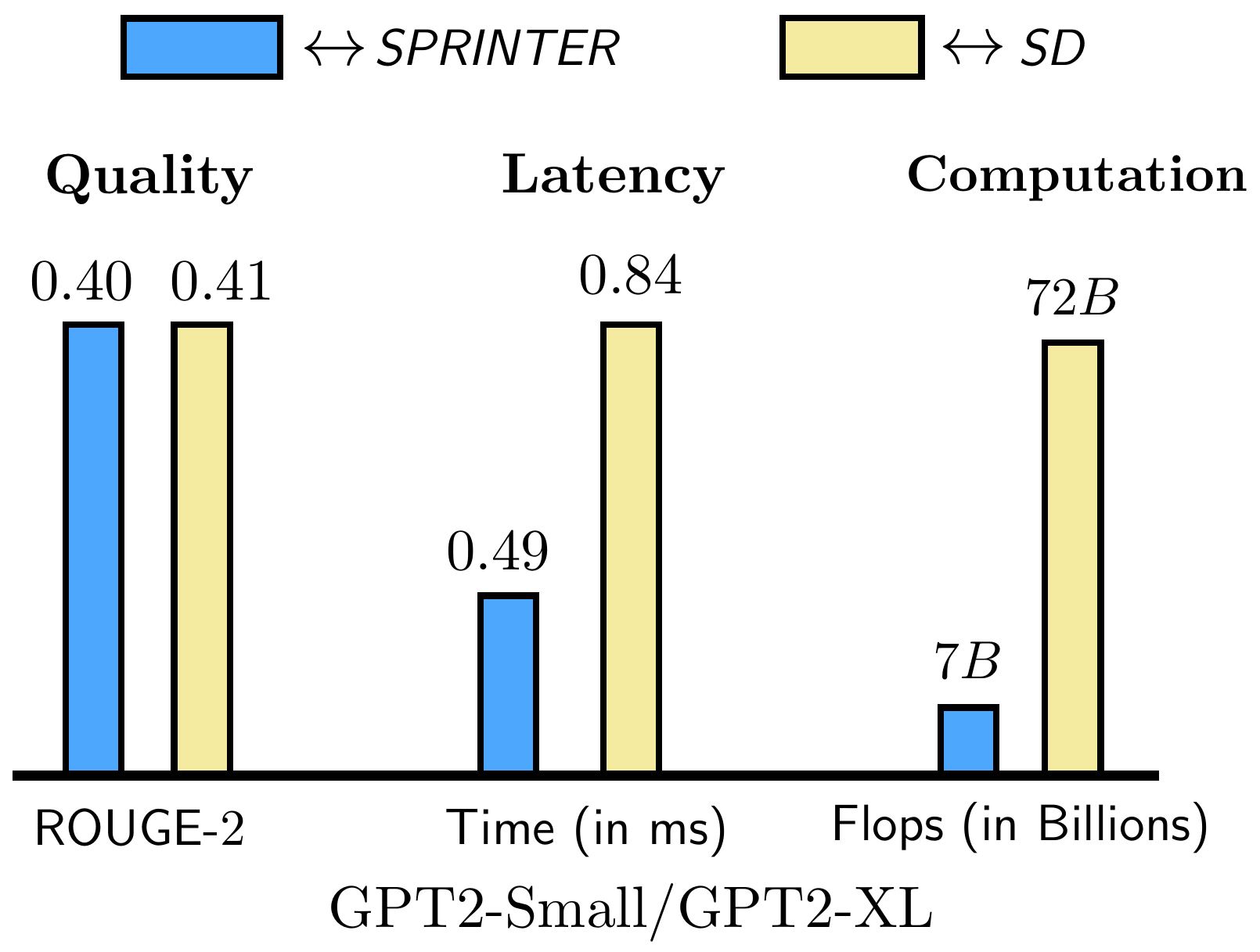}
        \caption{Comparison between \textit{SPRINTER} and SD in terms of \textit{Quality} (ROUGE score), \textit{Latency} (ms to generate a sentence), and \textit{Computation} (FLOPs to generate 20 acceptable tokens) for GPT2-Small and GPT2-XL.}
        \label{fig:quality_latency_comput_appendix}
    \end{minipage}%
    \hfill
    \begin{minipage}{0.48\textwidth}
        \centering
        \begin{tabular}{|c|c|}
            \hline
            Model & FLOPs for 20 tokens \\
            \hline
            GPT-Neo-125M & $8.01$B \\
            GPT-Neo-1.3B & $64.66$B \\
            GPT2-Small & $7.25$B \\
            GPT2-XL & $71.78$B \\
            \hline
        \end{tabular}
        \caption{Estimated FLOPs required for each model to generate 20 tokens.}
        \label{tab:Flops}
    \end{minipage}
\end{figure}


Recall from Section \ref{sec:theory}, that if $\gamma$ consecutive tokens are generated by the draft model, our verifier is of a lower complexity to the draft model, and, as evidenced by Figure \ref{tab:Flops}, that the number of FLOPS used by the target model are significantly greater than the FLOPS used by the target model, then the computational savings experienced under \textit{SPRINTER} is $(\gamma-1)F_t$ where $F_t$ denotes the number of FLOPS of the target model. This further shows that \textit{SPRINTER} is significantly less computationally expensive compared to SD, which relies on the target model for parallel verification every $\gamma$ tokens.

 Figure \ref{fig:quality_latency_comput_appendix} shows the quality-latency-computational profile experienced by \textit{SPRINTER} and SD assuming the GPT2-Small/GPT2-XL model pair. Similar to Figure \ref{fig:frameworkSPrinter}, which shows the same profile for the GPT-Neo model pair, we observe that \textit{SPRINTER} is able to incur a lower latency than SD while suffering a minimal dip in quality and incurring more computational savings by running inference on the draft model more frequently than the target model. 

\subsection{Prompt Design for Win-tie Rate Evaluation}\label{sec:add_quali_results}
\label{sec:add_qual}
In this work, GPT-4 is used to determine the win-tie rates. Specifically, GPT-4 is given the original prefix, a completion generated by \textit{SPRINTER}, and a completion generated by a baseline method. Similar to \cite{chakraborty2024transferqstarprincipled}, GPT-4 is then prompted to evaluate the quality of completions based on accuracy and level of detail of the responses. We provide an example of using GPT-4 to evaluate two prompt responses pairs in Fig \ref{fig:win_rate_exp}.
\begin{figure}
    \centering
    \includegraphics[width=0.85\linewidth]{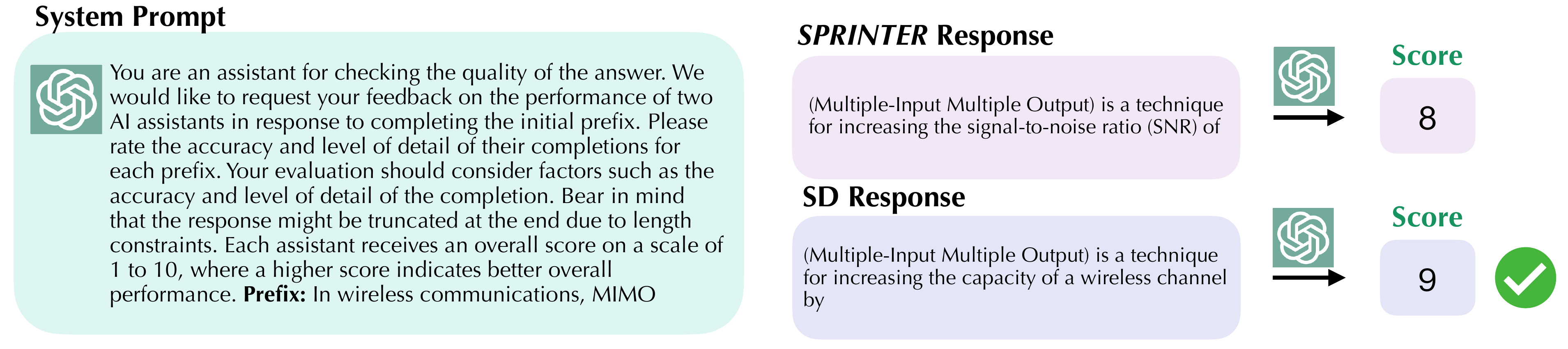}
    \caption{Illustrating win-tie rates evaluation process.}
    \label{fig:win_rate_exp}
\end{figure}
Assume we have the prefix "In wireless communications, MIMO ' and feed this to both \textit{SPRINTER} and SD. The completion from SD is ``(Multiple-Input Multiple Output) is a technique for increasing the capacity of a wireless channel by ". The completion from \textit{SPRINTER} is ``(Multiple-Input Multiple Output) is a technique for increasing the signal-to-noise ratio (SNR) of". GPT-4 is given the prefix and the two completions and gives a score of 8 and 9 to \textit{SPRINTER} and SD respectively, which means that SD is assigned the win for this prefix. 



\subsection{Additional Experimental Results}
\label{sec:add_results}
In this section, we present additional experimental results on the verifier's hyperparameter tuning. Specifically, we show the ROC curve for the Wiki-Summary dataset using the GPT-Neo-125M / GPT-Neo-1.3B model pair as shown in Figure \ref{fig:ROC_wiki_neo} and Table \ref{tab:my_label}. Figure \ref{fig:ROC_wiki_lm1b} shows the ROC Curve for the lm1b dataset. An interesting phenomenon emerges: as we increase the verifier decision threshold 
 $\lambda$, the area under the curve improves, reaching its optimal performance at $\lambda = 1.2$. The optimal balance at $\lambda = 1.2$ suggests that this threshold best separates acceptable from unacceptable tokens.
\begin{figure}[ht]
    \centering
    \begin{minipage}{0.45\textwidth}
        \centering
        \includegraphics[width=\textwidth]{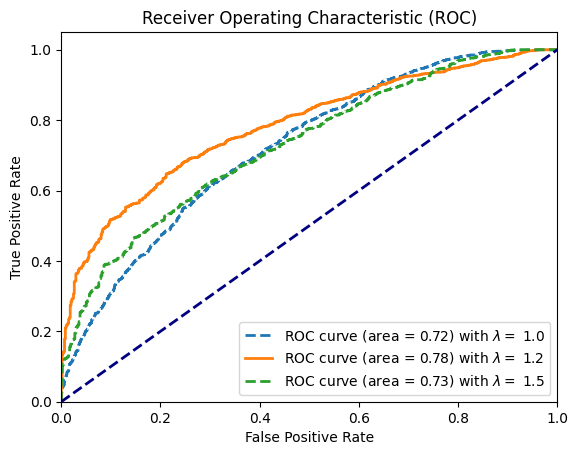}
        \caption{
        ROC curves on the Wiki-Summary dataset with varying acceptance threshold $\lambda$ for the $q(.)/p(.)$ ratio during training. We used $\lambda = 1.2$ for generating latency and quality results with GPT2-S/XL and GPT-Neo draft/target pairs.}
        \label{fig:ROC_wiki_neo}
    \end{minipage}
    \hfill
    \begin{minipage}{0.45\textwidth}
        \centering
        \includegraphics[width=\textwidth]{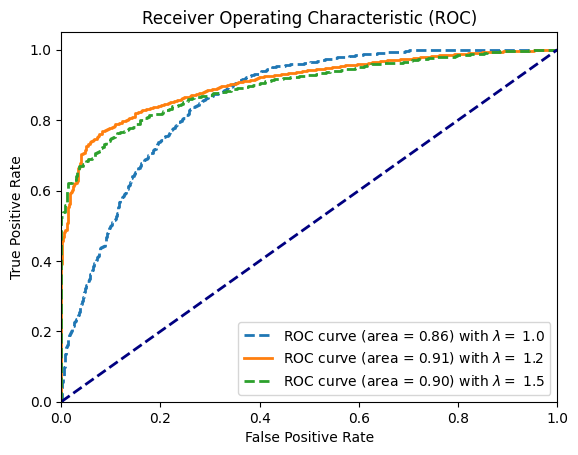}
        \caption{ROC curves on the LM1B dataset with varying acceptance threshold $\lambda$ for the $q(.)/p(.)$ ratio during training. We used $\lambda = 1.2$ for generating latency and quality results with GPT2-S/XL and GPT-Neo draft/target pairs.}
        \label{fig:ROC_wiki_lm1b}
    \end{minipage}
\end{figure}
\begin{table}[htbp]
    \centering
    \begin{tabular}{c|c|c}
    &Wiki-Summary& LM1B\\
    \hline
         GPT-Neo Model pair & ($\lambda, \tau$) = ( 1.2,0.5) & ($\lambda, \tau$) = ( 1.2,0.5)\\
\hline
   GPT2  Model pair &  ($\lambda, \tau$) = ( 1.2,0.5)  &($\lambda, \tau$) = ( 1.2,0.5) \\
    \end{tabular}
    \caption{Comparison of hyperparameters across datasets and model pairs}
    \label{tab:my_label}
\end{table}
\end{document}